\newtheorem{theorem}{Theorem}
\newtheorem{lemma}[theorem]{Lemma}
\newtheorem{proposition}[theorem]{Proposition}
\newtheorem{dfn}{Definition}
\newtheorem{assumption}{Assumption}
\newcommand{\E}{\mathbb{E}}
\renewcommand{\P}{\mathbb{P}}
\newcommand{\e}{\mathrm{e}}
\renewcommand{\epsilon}{\varepsilon}
\newcolumntype{Y}{>{\centering\arraybackslash}X}
\newcommand{\argmax}{\mathrm{arg\,max}}
\newcommand{\argmin}{\mathrm{arg\,min}}
\title{Trading Off Resource Budgets For \\Improved Regret Bounds}
\author{%
    Damon Falck$^{*}$ \\
    University of Oxford \\
    \texttt{damon.falck@gmail.com} \\
     \And
     Thomas Orton$^{*}$ \\
     University of Oxford \\
     \texttt{thomas.orton@cs.ox.ac.uk}
}
\begin{document}
\def\thefootnote{*}\footnotetext{Equal contribution.}

\maketitle

\begin{abstract}
  In this work we consider a variant of adversarial online learning where in each round one picks $B$ out of $N$ arms and incurs cost equal to the \textit{minimum} of the costs of each arm chosen. We propose an algorithm called Follow the Perturbed Multiple Leaders (FPML) for this problem, which we show (by adapting the techniques of \cite{DBLP:journals/jcss/KalaiV05}) achieves expected regret $\mathcal{O}(T^{\frac{1}{B+1}}\ln(N)^{\frac{B}{B+1}})$ over time horizon $T$ relative to the \textit{single} best arm in hindsight. This introduces a trade-off between the budget $B$ and the single-best-arm regret, and we proceed to investigate several applications of this trade-off. First, we observe that algorithms which use standard regret minimizers as subroutines can sometimes be adapted by replacing these subroutines with FPML, and we use this to generalize existing algorithms for Online Submodular Function Maximization \citep{DBLP:conf/nips/StreeterG08} in both the full feedback and semi-bandit feedback settings. Next, we empirically evaluate our new algorithms on an online black-box hyperparameter optimization problem. Finally, we show how FPML can lead to new algorithms for Linear Programming which require stronger oracles at the benefit of fewer oracle calls. 
\end{abstract}

\section{Introduction}

Adversarial online learning is a well-studied framework for sequential decision making with numerous applications. In each round $t=1,\dots,T$, an adversary chooses a hidden cost function $c_{t}:\mathcal{A} \rightarrow [0,1]$ from a set of arms $\mathcal{A}$ to costs in $[0,1]$. An algorithm must then choose an arm $a_t \in \mathcal{A}$, and incurs cost $c_t(a_t)$. In the \textit{full feedback} setting (Online Learning with Experts (OLwE)), the algorithm then observes the cost function $c_t$. In the \textit{partial feedback} setting (Multi-Armed Bandits (MAB)) the algorithm only observes the incurred cost $c_t(a_t)$. The objective is to find algorithms which minimize \textit{regret}, defined as the difference between the algorithm's cumulative cost and the cumulative cost of the single best arm in hindsight.

In this paper we consider a search-like variant of these problems where in each round one can pick a \textit{subset} of arms $S_t \subset \mathcal{A}$ with $|S_{t}|=B \geq 1$, and \textit{keep the arm with the smallest cost}. This variant appears naturally in many settings, including:

\begin{enumerate}
    \item Online algorithm portfolios \citep{DBLP:journals/ai/GomesS01}: In each round $t$, one receives a problem instance $x_t$, and can pick a subset $S_t \subset \mathcal{A}$ of algorithms to run in parallel to solve $x_t$. For example, $x_t$ could be a boolean satisfiability (SAT) problem, and $\mathcal{A}$ could be a collection of different SAT solving algorithms.  We let $c_{t}(a)=0$ if $a$ solves $x_t$ and $c_{t}(a)=1$ otherwise. Then if any $a \in S_t$ finds a solution to $x_t$ we incur $0$ cost in this round. Another example is online hyperparameter optimization (see Section \ref{sec:experiments}).
    
    \item Online bidding \citep{DBLP:conf/nips/ChenHLLLL16}: In each round $t$, an auctioneer sets up a first-price auction for bidders $S_t \subset \mathcal{A}$. Each bidder $a \in \mathcal{A}$ has a price $1-c_{t}(a)$ they are willing to pay, and the auctioneer receives $\max_{a \in S_t} 1-c_{t}(a)=1-\min_{a \in S_t} c_{t}(a)$, and so maximizing revenue is equivalent to minimizing costs. 
    
    \item Adaptive network routing \citep{awerbuch2008online}: In each round $t$, a network router receives a data packet $x_t$ and can pick a selection of network routes $S_t \subset \mathcal{A}$ to send it to its destination via in parallel. Let the cost $c_t(a)$ of a route $a$ be the total time taken for $x_t$ to reach its destination via $a$; the router receives cost $\min_{a \in S_t} c_t(a)$ equal to the smallest delay.

\end{enumerate}

In many applications the budget $B$ is a restricted resource (e.g. compute time or number of cores) we would like to keep small; this paper studies how one can trade off budget resources for better guarantees on the standard regret objective.

Formally, for any randomized algorithm \textbf{ALG} which chooses subset $S_t \subset \mathcal{A}$ in round $t$, and thus incurs cost $c_t(S_t):=\min_{a \in S_t}c_t(a)$, define
\begin{align*}
    R_{T}^{*}(\text{\textbf{ALG}}) := \max_{ c_1,\dots,c_T}\mathop{\mathbb{E}} \left[\sum_{t=1}^T c_{t}(S_t)-\min_{a^{*} \in \mathcal{A}} \sum_{t=1}^{T} c_{t}(a^{*})\right]
\end{align*}
to be the worst-case expected regret of \textbf{ALG} relative to the single best arm in hindsight, where the expectation is with respect to the randomness of \textbf{ALG}.\footnote{Here we consider an oblivious adversary model for simplicity, but we believe the results of this paper carry through to adaptive adversaries as well.} What guarantees can we give on $R_{T}^{*}$ as a function of our budget $B$? In the full feedback setting when $B=1$, this is the standard OLwE problem where it is known that $R_{T}^{*}=\Omega(\sqrt{T})$ and there are algorithms which achieve $R_{T}^{*} \leq 2\sqrt{T \ln(N)}$ \citep{lattimore2020bandit}, where $N = |\mathcal{A}|$. When $B=N$ the algorithm which chooses $S_t=\mathcal{A}$ in each round achieves $R_{T}^{*}=0$. But what bounds on $R_{T}^{*}$ can be achieved in the intermediate regime when $1<B<N$? To the best of our knowledge this question has not been directly answered by any prior work. 

\subsection{Contributions} \label{sec:contrib}

\paragraph{Theoretical results:} We present a new algorithm for this learning problem called \textit{Follow the Perturbed Multiple Leaders} (\textbf{FPML}), and show that in the full feedback setting $R_{T}^{*}(\text{\textbf{FPML}})\leq \mathcal{O}(T^{\frac{1}{B+1}}\ln(N)^{\frac{B}{B+1}})$. This allows for a direct trade-off between the budget $B$ and the regret bound (in particular, allowing resources $B\ge\Omega(\ln(T))$ leads to regret \textit{constant} in $T$) and recovers the familiar $\mathcal{O}(\sqrt{T \ln(N)})$ bound when $B=1$. We then show that in the \textit{semi-bandit feedback} setting (where the algorithm finds out only the costs of the arms it chooses) this bound can be converted to $R_{T}^{*}(\text{\textbf{FPML}})\leq \mathcal{O}( T^{\frac{1}{B+1}}(K\ln(N))^{\frac{B}{B+1}})$ if one has unbiased cost estimators bounded in $[0,K]$.

We also consider the more general problem of Online Submodular Function Maximization (OSFM), for which prior work gives an online greedy algorithm \textbf{OG} \citep{DBLP:conf/nips/StreeterG08}. When given a budget of $B$ per round, \textbf{OG} achieves regret $\mathcal{O}(\sqrt{TB\ln(N)})$ with respect to $(1-\textit{e}^{-1})$OPT($B$), where OPT($B$) is the performance of the best fixed length-$B$ schedule (see Section \ref{sec:application} for a formal definition of OSFM). Note that in this guarantee the regret benchmark is a function of the algorithm budget. By replacing a subroutine in \textbf{OG} with \textbf{FPML}, we generalize \textbf{OG} to a new algorithm \textbf{OG\textsubscript{hybrid}}. Unlike \textbf{OG}, \textbf{OG\textsubscript{hybrid}} is able to give regret bounds against benchmarks which are decoupled from the algorithm budget. This allows one to more easily quantify the trade-off of increasing the budget against a fixed regret objective. As a special case, we are able to show that having a budget of $B = B'\lceil\ln(T)^2\rceil$ per round allows one to achieve regret $\mathcal{O}(B'\ln(T)\ln(N))$ with respect to OPT($B'$). One interpretation of this result is that if you are willing to increase your budget (e.g. runtime) by a factor of $\ln(T)^2$, you are able to improve your performance guarantee benchmark from $(1-\textit{e}^{-1})$OPT($B'$) to  OPT($B'$). Likewise, your regret growth rate in terms of the number of rounds changes from $\mathcal{O}(\sqrt{T})$ to $\mathcal{O}(\ln(T))$. 

Finally, in Section \ref{sec:lp}  we show how to use \textbf{FTML} to generalize a technique for solving linear programs assuming access to an oracle which solves relaxed forms of the linear program. To obtain an $\epsilon$-approximate solution to the linear program requires $\left(\frac{1}{\epsilon}\right)^{\frac{B+1}{B}} (4\rho)^{\frac{B+1}{B}} (1+\ln(n))$ oracle calls, where the parameters $(B,\rho)$ are related to the power of the oracle and $n$ is the number of linear constraints. The case $B=1$ coincides with known results. 

\paragraph{Experimental results:} We benchmark both \textbf{FPML} and \textbf{OG\textsubscript{hybrid}} on an online black-box hyperparameter optimization problem based on the 2020 NeurIPS BBO challenge \citep{turner2021bayesian}. We find that both these new algorithms outperform \textbf{OG} for various compute budgets. We are able to explain why this happens for this specific dataset, and discuss the scenarios under which each algorithm would perform better. 

\paragraph{Techniques:}

Minimizing $R_{T}^{*}$ is an important subroutine for a large variety of applications including Linear Programming, Boosting, and solving zero sum games \citep{DBLP:journals/toc/AroraHK12}. Traditionally an experts algorithm such as \textbf{Hedge} \citep{DBLP:journals/iandc/LittlestoneW94}, which pulls a single arm per round, will be used as a subroutine to minimize $R_{T}^{*}$. We highlight how in the cases of OSFM and Linear Programming, one can simply replace a single arm $R_{T}^{*}$-minimizing subroutine with \textbf{FPML} and get performance bounds with little or no alteration to the original proofs. The resulting algorithms have improved bounds (due to improved bounds on $R_{T}^{*}$ when $B>1$) at the cost of qualitatively changing the algorithm (e.g. requiring a larger budget or more powerful oracle). This is significant because it highlights the potential of how bounds on $R_{T}^{*}$ when $B>1$ can lead to new results in other application areas. In Section \ref{sec:FPML} we also highlight how the proof techniques of \cite{DBLP:journals/jcss/KalaiV05} for bounding $R_{T}^{*}$ in the traditional experts setting can naturally be generalized to the case when $B>1$, which is of independent interest.

\subsection{Relation to prior work} \label{sec:prior}

One can alternatively formulate more gewe consider as receiving the maximum \textit{reward} $r_{t}(a)=1-c_{t}(a)$ of each arm chosen instead of the minimum cost. In this maximum of rewards formulation, the problem fits within the OSFM framework where (a) all actions are unit-time and (b) the submodular \textit{job} function is always a maximum of rewards. The rewards formulation of the problem has also been separately studied as the K-MAX problem (here $K=B$) \cite{DBLP:conf/nips/ChenHLLLL16}. In the OSFM setting, \cite{DBLP:conf/nips/StreeterG08} give an online greedy approximation algorithm which guarantees $\mathop{\mathbb{E}}[(1-e^{-1})\text{OPT}(B)-\text{Reward}_{T}] \leq \mathcal{O}(\sqrt{TB\ln(N)})$ in the full feedback adversarial setting, where $\text{OPT}(B)$ is the cumulative reward of the best fixed subset of $B$ arms in hindsight, and $\text{Reward}_{T}$ is the cumulative reward of the algorithm. A similar bound of $\mathcal{O}(B\sqrt{TN\ln(N)})$ can be given in a semi-feedback setting. Conversely in the full feedback setting, \cite{streeter-detailed} shows that any algorithm has worst-case regret $\mathop{\mathbb{E}}[\text{OPT}(B)-\text{Reward}_{T}] \geq \Omega(\sqrt{TB\ln(N/B)})$ when one receives the maximum of rewards in each round. \cite{DBLP:conf/nips/ChenHLLLL16} study the K-MAX problem and other non-linear reward functions in the stochastic combinatorial multi-armed bandit setting. Assuming the rewards satisfy certain distributional assumptions, they give an algorithm which achieves distribution-independent regret bounds of $\mathop{\mathbb{E}}[(1-\epsilon) \text{OPT}(B)-\text{Reward}_{T}] \leq \mathcal{O}(\sqrt{TBN\ln(T)})$ for $\epsilon>0$ with semi-bandit feedback. Note however that we consider the adversarial setting in this paper. 

More broadly, these problems fall within the combinatorial online learning setting where an algorithm may pull a subset of arms in each round. Much prior work has focused on combinatorial bandits where the reward is linear in the subset of arms chosen, which can model applications including online advertising and online shortest paths \citep{DBLP:journals/jcss/Cesa-BianchiL12,DBLP:journals/mor/AudibertBL14,DBLP:conf/nips/CombesSPL15}. The case of non-linear reward is comparatively less studied, but having non-linear rewards (such as max) allows one to model a wider variety of problems including online expected utility maximization \citep{DBLP:conf/focs/LiD11,DBLP:conf/nips/ChenHLLLL16}. As some examples of prior work in the stochastic setting, \citep{DBLP:conf/icml/GopalanMM14} uses Thompson Sampling to deal with non-linear rewards of functions of subsets of arms (including the max function), but requires the rewards to come from a known parametric distribution. \cite{DBLP:journals/jmlr/ChenWYW16} considers a model where the subset of arms pulled is randomized based on pulling a `super-arm', and the reward is a non-linear function of the values of the arms pulled. In the adversarial setting, \cite{han2021adversarial} study the combinatorial MAB problem when rewards can be expressed as a $d$-degree polynomial. 

In contrast to prior work which focuses on giving algorithms which compete against benchmarks which have \textit{the same} budget as the algorithm, this work is concerned with the trade-off between regret bounds and budget size. We focus on giving regret bounds against OPT$(1)$, and we use this result in Section \ref{sec:application} to get regret bounds against $\text{OPT}(B')$ for $B'<B$ in OSFM.  Decoupling the regret benchmark $\text{OPT}(B')$ from the algorithm budget $B$ can be useful when one would like to control the strength of a regret bound against a specific target $\text{OPT}(B')$ for theoretical or applied reasons. For example \cite{DBLP:journals/toc/AroraHK12} survey a wide variety of applications which rely on bounding $R_{T}^{*}$, but bounds such as $\mathop{\mathbb{E}}[(1-e^{-1})\text{OPT}(B)-\text{Reward}_{T}] \leq \mathcal{O}(\sqrt{TB\ln(N)})$ do not immediately imply useful bounds on $\text{OPT}(1)-\text{Reward}_{T}$.

\section{Follow the Perturbed Multiple Leaders}

We begin by considering the full feedback setting. We first check that allowing the algorithm to choose $B>1$ arms per round, while only competing against the best \textit{single} fixed arm in hindsight, does not make the problem trivial. We do this by showing that any deterministic algorithm with budget $B<N$ still achieves linear regret in the number of rounds. This is achieved by setting $c_{t}(a)=1$ if $a \in S_t$, $c_{t}(a)=0$ otherwise.

\begin{proposition}\label{prop:worst-case-deterministic}
In the full feedback setting, any deterministic algorithm with arm budget $B \leq N$ per round has worst-case regret $R_{T}^{*} \geq \left(1-\frac{B}{N}\right)T$.
\end{proposition}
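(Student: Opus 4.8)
The plan is to exhibit a single cost sequence $c_1, \dots, c_T$ --- depending on the algorithm --- on which the algorithm pays $T$ while the best fixed arm pays at most $TB/N$; since $R_T^*$ is a maximum over cost sequences, this immediately yields the claimed bound. Because the algorithm is deterministic, the subset $S_t \subset \mathcal{A}$ it selects in round $t$ is a fixed function of the observed history $c_1, \dots, c_{t-1}$, so we may define the adversary's sequence inductively: having fixed $c_1, \dots, c_{t-1}$ (and hence $S_t$), set $c_t(a) = 1$ for every $a \in S_t$ and $c_t(a) = 0$ for every $a \notin S_t$. This is consistent with the oblivious adversary model, since the whole sequence is determined before play begins (there is no randomness to react to), and the expectation in the definition of $R_T^*$ is vacuous for a deterministic algorithm.

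On this sequence the algorithm incurs, in round $t$, cost $c_t(S_t) = \min_{a \in S_t} c_t(a) = 1$, because every arm it picked was assigned cost $1$; hence its cumulative cost is exactly $T$. For the benchmark, observe that an arm $a$ pays $1$ in round $t$ precisely when $a \in S_t$, so its cumulative cost is $\sum_{t=1}^T \mathds{1}[a \in S_t]$. Summing over all $N$ arms and swapping the order of summation gives $\sum_{a \in \mathcal{A}} \sum_{t=1}^T \mathds{1}[a \in S_t] = \sum_{t=1}^T |S_t| = TB$, so by averaging there exists some $a^* \in \mathcal{A}$ with cumulative cost at most $TB/N$.

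Combining these two facts, the regret on this particular sequence is at least $T - TB/N = (1 - B/N)T$, and taking the maximum over cost sequences in the definition of $R_T^*$ proves the proposition. There is essentially no technical obstacle here; the only point requiring a little care is the inductive well-definedness of the adversary's construction, which hinges on the fact that $S_t$ depends only on $c_1, \dots, c_{t-1}$ and not on $c_t$ itself --- precisely the place where determinism of the algorithm is used (and precisely what fails for randomized algorithms, consistent with the $\mathcal{O}(\sqrt{T\ln N})$ upper bound for $B=1$).
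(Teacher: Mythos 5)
Your proof is correct and follows essentially the same route as the paper's: the same inductive adversary construction (cost $1$ on the chosen set $S_t$, $0$ elsewhere), the same counting argument $\sum_t |S_t| \le BT$, and the same averaging step to find an arm with cumulative cost at most $BT/N$. The extra remarks on well-definedness and the role of determinism are sound but not a different argument.
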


Likewise, it can be shown that the algorithm which chooses a uniformly random subset of $B$ arms in each round has worst-case expected regret at least $T(1-\frac{B}{N})^{B}$ (achieved by having one arm have cost $0$ across all rounds and every other arm having cost $1$). These two observations show that any solution for achieving sub-linear regret in $T$ requires randomization which depends in some non-trivial way on the prior observed costs even when $B>1$.

\subsection{Generalizing Follow the Perturbed Leader}
\label{sec:FPML}

Choosing the current lowest perturbed-cost arm in each round, \textit{Follow the Perturbed Leader} (\textbf{FPL}) \citep{DBLP:journals/jcss/KalaiV05}, is a well-known regret minimization technique which achieves optimal worst-case regret against adaptive adversaries in the OLwE setting. In this section we generalize the \textbf{FPL} algorithm to \textit{Follow the Perturbed Multiple Leaders} (\textbf{FPML}). In each round, \textbf{FPML} perturbs the cumulative costs of each arm by adding noise, and then picks the $B$ arms with lowest cumulative perturbed cost. This is precisely \textbf{FPL} when $B=1$. We show how one can extend the proof techniques of \citet{DBLP:journals/jcss/KalaiV05} in a natural way to prove that \textbf{FPML} achieves worst-case regret $R_{T}^{*}(\text{\textbf{FPML}}) \leq 2T^{\frac{1}{B+1}}(1+\ln(N))^{\frac{B}{B+1}}$.

\begin{algorithm}\label{alg:fpl-vs-1}
\caption{\textbf{FPML}($B$,$\epsilon$)}
\begin{algorithmic}
\Require $N \geq B \geq 1, \epsilon>0$.

\State Initialize the cumulative cost $C_{0}(a)\leftarrow 0$ for each arm $a \in \mathcal{A}$.

\For{round $t=1,\dots,T$}

\State 1. For each arm $a \in \mathcal{A}$, draw a noise perturbation $p_{t}(a) \sim \frac{1}{\epsilon}$ Exp.

\State 2. Calculate the perturbed cumulative costs for round $t-1$, $\tilde{C}_{t-1}(a) \leftarrow C_{t-1}(a)-p_{t}(a)$.

\State 3. Pull the $B$ arms with the lowest perturbed cumulative costs according to $\tilde{C}_{t-1}$.  Break ties arbitrarily. 

\State 4. Update the cumulative costs for each arm, $C_{t}(a) \leftarrow C_{t-1}(a)+ c_{t}(a)$.

\EndFor

\end{algorithmic}
\end{algorithm}

\begin{theorem}\label{thm:FPML-upper-bound} In the full feedback setting, where $S_t \subset \mathcal{A}$ is the subset of arms chosen by \textbf{FPML} in round $t$, we have:
\[
    \max_{c_1,\dots,c_{T}}\mathop{\mathbb{E}}\left[(1-\epsilon^{B})\sum_{t=1}^{T} c_{t}(S_t)-\min_{a^{*} \in \mathcal{A}} \sum_{t=1}^Tc_{t}(a^{*})\right] \leq \frac{(1+\ln(N))}{\epsilon}.
\]
In particular, for $\epsilon=((\ln(N)+1)/T)^{1/(B+1)}$, we have
\[
 R_{T}^{*}(\text{\textbf{FPML}}) \leq 2T^{\frac{1}{B+1}}(1+\ln(N))^{\frac{B}{B+1}}.
\]
\end{theorem}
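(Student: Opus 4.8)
The plan is to adapt the potential-function analysis of \textbf{FPL} from \cite{DBLP:journals/jcss/KalaiV05}. Define the potential $\Phi_t := \E_{p}\!\left[\min_{a \in \mathcal{A}}\big(C_t(a) - p(a)\big)\right]$, where $p = (p(a))_{a \in \mathcal{A}}$ is a vector of i.i.d.\ $\tfrac{1}{\epsilon}$Exp variables (the same law as the per-round perturbations). The boundary terms are easy: $\Phi_0 = -\E[\max_a p(a)] = -H_N/\epsilon \ge -(1+\ln N)/\epsilon$, since the maximum of $N$ i.i.d.\ $\tfrac{1}{\epsilon}$Exp variables has mean $H_N/\epsilon$ ($H_N$ the $N$th harmonic number, so $H_N \le 1+\ln N$); and $\Phi_T \le \min_{a^*} C_T(a^*) = \min_{a^*}\sum_t c_t(a^*)$ by plugging the hindsight-best arm into the minimum. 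Since $\Phi_T - \Phi_0 = \sum_{t=1}^T (\Phi_t - \Phi_{t-1})$, the whole theorem reduces to the per-round inequality
\[
  \Phi_t - \Phi_{t-1} \ \ge\ (1-\epsilon^{B})\,\E\big[c_t(S_t)\big].
\]
Summing this over $t$ gives the first display; the second display then follows by substituting $\epsilon = ((\ln N + 1)/T)^{1/(B+1)}$, bounding $\epsilon^B \sum_t c_t(S_t) \le \epsilon^B T$ since each $c_t(S_t)\le 1$, and simplifying.

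For the per-round inequality, couple $p$ with the round-$t$ perturbation $p_t$ and let $\tilde a_t := \argmin_{a}(C_t(a) - p_t(a))$ be the perturbed leader \emph{after} the round-$t$ cost update. Evaluating both minima at $\tilde a_t$ gives, pointwise in $p_t$, $\min_a(C_t(a)-p_t(a)) - \min_a(C_{t-1}(a)-p_t(a)) \ge c_t(\tilde a_t) \ge 0$, and moreover $c_t(\tilde a_t) \ge c_t(S_t)$ whenever $\tilde a_t \in S_t$ (because $c_t(S_t) = \min_{a\in S_t} c_t(a)$). Hence $\Phi_t - \Phi_{t-1} \ge \E[c_t(S_t)\,\mathbf{1}\{\tilde a_t \in S_t\}] = \E[c_t(S_t)] - \E[c_t(S_t)\,\mathbf{1}\{\tilde a_t \notin S_t\}]$, so it suffices to prove $\E[c_t(S_t)\,\mathbf{1}\{\tilde a_t \notin S_t\}] \le \epsilon^B \E[c_t(S_t)]$, which I would obtain from the stronger claim
\[
  \P[\,\tilde a_t \notin S \mid S_t = S\,] \ \le\ \epsilon^{B} \qquad \text{for every $B$-subset } S \subseteq \mathcal{A}
\]
by multiplying through by $\min_{a\in S} c_t(a)$ and summing over $S$.

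This last claim is the crux, and it is where the budget $B$ buys the improvement: for $\tilde a_t$ to fall outside the old top-$B$ set $S_t$, \emph{every one} of the $B$ arms of $S_t$ must ``flip'' — move from being ahead of $\tilde a_t$ in the pre-update perturbed order to behind it after the update — and each flip is individually unlikely. To make this precise I would condition on the perturbations $(p_t(b))_{b \notin S}$ of all arms \emph{not} in $S$: given these, the event $\{S_t = S\}$ becomes a product event $\{p_t(a) > L_a \ \forall a \in S\}$ for suitable constants $L_a$, the post-update leader among $\mathcal{A}\setminus S$ is a fixed arm $j^\star$, and $\{\tilde a_t \notin S\}$ becomes the product event $\{p_t(a) < U_a \ \forall a \in S\}$ with $U_a - L_a \le c_t(a) - c_t(j^\star) \le c_t(a)$. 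By independence of the $p_t(a)$ across $a \in S$, together with the facts that the $\tfrac{1}{\epsilon}$Exp density is at most $\epsilon$ and that the conditioning $\{p_t(a) > L_a\}$ can only help, each factor $\P[p_t(a) < U_a \mid p_t(a) > L_a]$ is at most $\epsilon\, c_t(a) \le \epsilon$, and multiplying the $B$ factors gives $\epsilon^{B}$. The main obstacle is executing this conditioning cleanly: this is precisely the familiar subtlety in \textbf{FPL} analysis that ``which arm is the leader'' depends on \emph{all} the perturbations jointly, now amplified because we need $B$ simultaneous flips; conditioning on $(p_t(b))_{b\notin S}$ as a block (rather than on individual arms) is what decouples the $B$ flip events and makes the product bound $\epsilon^B$ go through.
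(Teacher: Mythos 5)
Your proof is correct and is essentially the paper's own argument in a different wrapper: your potential $\Phi_t=\E_p[\min_a(C_t(a)-p(a))]$ with the boundary bounds $\Phi_0\ge-(1+\ln N)/\epsilon$ and $\Phi_T\le\min_{a^*}C_T(a^*)$ is exactly the paper's ``round zero with cost $-p$'' plus be-the-leader device (\cref{lem:FPML-overtaking-bound}), and your crux --- conditioning on $(p_t(b))_{b\notin S}$ to turn both $\{S_t=S\}$ and $\{\tilde a_t\notin S\}$ into product events over $a\in S$, then using memorylessness to bound each of the $B$ factors by $\epsilon c_t(a)\le\epsilon$ --- is precisely \cref{lem:FPML-perturb} together with the independence lemma (\cref{lem:indep}). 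All steps check out, including the handling of fresh per-round perturbations via identical marginals, so no gaps.
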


The proof follows the same three high level steps which appear in \citet{DBLP:journals/jcss/KalaiV05} for \textbf{FPL}, but we extend these ideas to the case where $B>1$. We first observe that the algorithm which picks the $B$ lowest cumulative cost arms in each round only incurs regret when the best arm in round $t$ is not one of the best $B$ arms in round $t-1$.

\begin{lemma}\label{lem:FPML-overtaking-bound}
 Consider a fixed sequence of cost functions $c_{1},\dots,c_{T}$. 
Let $a_{t}^{*,j}$ be the $j$\textsuperscript{th} lowest cumulative cost arm in hindsight after the first $t$ rounds, breaking ties arbitrarily. Let $S_{t}^{*}\coloneqq\{a_{t-1}^{*,j}\mid j \in [B]\}$ be the set of the $B$ lowest cost arms at the end of round $t-1$. Then for each \(i \in [T]\),
\[
    R_{i}:=\sum_{t=1}^i c_{t}(S_t^{*})-\min_{a^{*} \in \mathcal{A}} \sum_{t=1}^i c_{t}(a^{*}) \leq \sum_{t=1}^{i} \mathds{1}[a^{*,1}_{t} \not \in S_{t}^{*}]
\]
and $R_{i}-R_{i-1} \leq \mathds{1}[a^{*,1}_{i} \not \in S_{i}^{*}]$.
\end{lemma}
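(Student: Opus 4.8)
The plan is to prove the per-round (telescoping) bound $R_i - R_{i-1} \le \mathds{1}[a_i^{*,1} \notin S_i^*]$ directly, and then sum it over $i=1,\dots,T$ to obtain the cumulative bound (using $R_0 = 0$). So the heart of the matter is the single inequality
\[
    c_i(S_i^*) - \Bigl(\min_{a^*}\sum_{t=1}^i c_t(a^*) - \min_{a^*}\sum_{t=1}^{i-1} c_t(a^*)\Bigr) \le \mathds{1}[a_i^{*,1}\notin S_i^*].
\]

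First I would unpack notation. Write $C_i(a) := \sum_{t=1}^i c_t(a)$ for the cumulative cost of arm $a$ through round $i$, so that $a_i^{*,j}$ is the arm with the $j$-th smallest value of $C_i$, and $\min_{a^*}\sum_{t=1}^i c_t(a^*) = C_i(a_i^{*,1})$. The claim to prove becomes
\[
    c_i(S_i^*) \le C_i(a_i^{*,1}) - C_{i-1}(a_{i-1}^{*,1}) + \mathds{1}[a_i^{*,1}\notin S_i^*].
\]
Now I split into two cases according to the indicator. Case 1: $a_i^{*,1} \in S_i^*$. Then since $c_i(S_i^*) = \min_{a\in S_i^*} c_i(a) \le c_i(a_i^{*,1})$, it suffices to show $c_i(a_i^{*,1}) \le C_i(a_i^{*,1}) - C_{i-1}(a_{i-1}^{*,1})$, i.e. $C_{i-1}(a_{i-1}^{*,1}) \le C_{i-1}(a_i^{*,1})$, which holds because $a_{i-1}^{*,1}$ minimizes $C_{i-1}$ by definition. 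Case 2: $a_i^{*,1} \notin S_i^*$. Here the indicator contributes $1$, and I would argue $c_i(S_i^*) \le 1$ is not quite enough on its own; instead, pick any $a \in S_i^*$ — since $a \in S_i^*$ means $a$ is among the $B$ smallest $C_{i-1}$-values, we have $C_{i-1}(a) \ge C_{i-1}(a_{i-1}^{*,1})$ trivially but I actually want an upper bound on $C_{i-1}(a)$. The cleaner route: use that $a_i^{*,1} \notin S_i^*$ forces $C_{i-1}(a) \le C_{i-1}(a_i^{*,1})$ for some $a \in S_i^*$ (indeed for the arm $a_{i-1}^{*,B} \in S_i^*$, which has the $B$-th smallest $C_{i-1}$, while $a_i^{*,1}$ is not in the top $B$, so $C_{i-1}(a_{i-1}^{*,B}) \le C_{i-1}(a_i^{*,1})$). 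Then
\[
    c_i(S_i^*) \le c_i(a_{i-1}^{*,B}) = C_i(a_{i-1}^{*,B}) - C_{i-1}(a_{i-1}^{*,B}) \le C_i(a_i^{*,1}) - C_{i-1}(a_i^{*,1}),
\]
wait — the last step needs $C_i(a_{i-1}^{*,B}) \le C_i(a_i^{*,1})$, which is false in general. So instead I bound $c_i(S_i^*) \le c_i(a_{i-1}^{*,B}) = C_i(a_{i-1}^{*,B}) - C_{i-1}(a_{i-1}^{*,B})$ and combine with $C_{i-1}(a_{i-1}^{*,B}) \le C_{i-1}(a_i^{*,1})$ to get $c_i(S_i^*) \le C_i(a_{i-1}^{*,B}) - C_{i-1}(a_i^{*,1})$; then since costs lie in $[0,1]$, $c_i(a_{i-1}^{*,B}) \le 1$ gives $C_i(a_{i-1}^{*,B}) \le C_{i-1}(a_{i-1}^{*,B}) + 1$... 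I would reconcile these by noting the target RHS in Case 2 is $C_i(a_i^{*,1}) - C_{i-1}(a_{i-1}^{*,1}) + 1$, and use $C_i(a_i^{*,1}) \ge C_i(a_{i-1}^{*,1}) - $ (something) — the correct identity is simply $C_i(a_i^{*,1}) \le C_i(a_{i-1}^{*,1}) = C_{i-1}(a_{i-1}^{*,1}) + c_i(a_{i-1}^{*,1})$, so actually I want to show $c_i(S_i^*) \le C_i(a_i^{*,1}) - C_{i-1}(a_{i-1}^{*,1}) + 1$ and I already have $c_i(S_i^*) \le C_i(a_{i-1}^{*,B}) - C_{i-1}(a_i^{*,1}) \le C_i(a_{i-1}^{*,B}) - C_{i-1}(a_{i-1}^{*,1})$; since $c_i(a_{i-1}^{*,B}) \in [0,1]$ we get $C_i(a_{i-1}^{*,B}) \le C_{i-1}(a_{i-1}^{*,B}) + 1 \le C_{i-1}(a_i^{*,1}) + 1$, hmm this is circular. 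The honest resolution: in Case 2 just use $c_i(S_i^*) \le 1$ and show $1 \le C_i(a_i^{*,1}) - C_{i-1}(a_{i-1}^{*,1}) + 1$, i.e. $C_{i-1}(a_{i-1}^{*,1}) \le C_i(a_i^{*,1})$, which holds since $C_i \ge C_{i-1}$ pointwise and $a_i^{*,1}$ minimizes $C_i$ while $a_{i-1}^{*,1}$ minimizes $C_{i-1}$: $C_{i-1}(a_{i-1}^{*,1}) \le C_{i-1}(a_i^{*,1}) \le C_i(a_i^{*,1})$.

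So the argument is genuinely a short two-case analysis, and the main obstacle — as the false turns above illustrate — is bookkeeping: carefully tracking which arm minimizes which cumulative sum, and being disciplined about the fact that the minimizing arm can change from round $i-1$ to round $i$. Once the per-round inequality is established, summing over $i$ telescopes the left side into $\sum_{t=1}^i c_t(S_t^*) - C_i(a_i^{*,1}) = R_i$ and the right side into $\sum_{t=1}^i \mathds{1}[a_t^{*,1}\notin S_t^*]$, giving the cumulative bound; the base case $R_0 = 0 \le 0$ is immediate. I would also remark that this lemma is purely combinatorial — it makes no reference to the perturbations $p_t$ — and isolates exactly the ``overtaking'' events that the later steps of the proof (bounding $\P[a_t^{*,1}\notin S_t^*]$ via the exponential perturbations) will need to control.
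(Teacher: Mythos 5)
Your final argument is correct and is essentially the paper's proof: the paper likewise writes $R_i - R_{i-1} = c_i(S_i^*) - c_i(a_i^{*,1}) + \bigl(C_{i-1}(a_{i-1}^{*,1}) - C_{i-1}(a_i^{*,1})\bigr)$, drops the nonpositive parenthesized term since $a_{i-1}^{*,1}$ minimizes $C_{i-1}$, and bounds $c_i(S_i^*) - c_i(a_i^{*,1}) \le \mathds{1}[a_i^{*,1} \notin S_i^*]$ by exactly your two cases, then telescopes. The dead ends you flag along the way are correctly abandoned, and your "honest resolution" in Case 2 is a trivially equivalent rearrangement of the paper's final inequality.
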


This is a generalization of the familiar result that when $B=1$, following the leader has regret bounded by the number of times the leader is overtaken \citep{DBLP:journals/jcss/KalaiV05}. 

The second step is to argue that if the cumulative costs are perturbed slightly, it becomes unlikely that the event $\{a^{*,1}_{t} \not \in S_{t}^{*}\}$ will occur. One way to see this is as follows: fix a round $t$, and let $C_{t-1}(a)$ be the cumulative cost of $a$ at the end of round $t-1$. Let $M\coloneqq C_{t-1}(a^{*,B+1})$. Then every $a \in S_t^*$ has $C_{t-1}(a) \leq M$. If it is also true that  $C_{t-1}(a) < M-c_t(a)$ for any $a \in S_t^*$ then the event  $\{a^{*,1}_{t} \not \in S_{t}^{*}\}$ cannot occur. This is because $C_{t}(a)<(M-c_t(a))+c_{t}(a)=M$ but any $a' \in \mathcal{A}-S_t$ has $C_{t}(a')\geq M$, so $a' \not = a^{*,1}_{t}$. If we had initially perturbed each $C_{t-1}(a)$ by subtracting independent exponential noise $p(a)\sim \frac{1}{\epsilon} \text{Exp}$, then conditional on $M$ the event $\{C_{t-1}(a) < M-c_t(a)\}$ is jointly independent for each $a \in S_t^*$. Moreover the probability of this inequality not holding is equal to $\mathds{P}[p(a)<v+c_{t}(a)|p(a) \geq v]$ for $v$ equal to the unperturbed cost of $a$ at round $t-1$ minus $M$, which is bounded by $\epsilon c_{t}(a)$ (due to the memorylessness property of the exponential distribution).

\begin{lemma}\label{lem:FPML-perturb}
Fix a sequence of cost functions $c_{1},\dots,c_{T}$. Let $C_{i}(a)=\sum_{t=1}^{i} c_{t}(a)$ and $\tilde{C}_{i}(a)=C_{i}(a)-p(a)$ be the perturbed cumulative cost of arm $a$ at the end of round $i$, where $p(a) \sim \frac{1}{\epsilon} \text{Exp}$. Let $\tilde{a}_{t}^{*,j}$ be the $j$\textsuperscript{th} lowest cumulative cost arm in hindsight after the first $t$ rounds using these perturbed costs, and let $\tilde{S}_{t}^{*}=\{\tilde{a}_{t-1}^{*,j}\mid j \in [B]\}$. Then 
\[
    \mathop{\mathbb{E}}\left[\mathds{1}\left[\tilde{a}^{*,1}_{t} \not \in \tilde{S}_{t}^{*}\right]\right] \leq \mathop{\mathbb{E}}\left[\epsilon^{B} c_{t}(\tilde{S}_{t}^{*})\right].
\]
\end{lemma}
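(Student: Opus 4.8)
The plan is to combine the deterministic observation sketched just before the statement with a conditioning argument that decouples the chosen set $\tilde{S}_t^*$ from the perturbations that determine the threshold $M$, and then to invoke memorylessness of the exponential distribution. Fix the sequence $c_1,\dots,c_T$ and a round $t$, write $Y_a:=\tilde{C}_{t-1}(a)=C_{t-1}(a)-p(a)$, and let $M:=Y_{\tilde{a}_{t-1}^{*,B+1}}$ denote the $(B{+}1)$-th smallest of the values $\{Y_a\}_{a\in\mathcal{A}}$.

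\emph{Step 1: reduce to a product of indicators.} Every arm $a\notin\tilde{S}_t^*$ has perturbed cumulative cost after round $t$ equal to $Y_a+c_t(a)\ge M$ (as $Y_a\ge M$ and $c_t\ge 0$); hence if $\tilde{a}_t^{*,1}\notin\tilde{S}_t^*$ then $Y_a+c_t(a)\ge M$ for \emph{every} $a\in\tilde{S}_t^*$. Decomposing over the value $S$ of $\tilde{S}_t^*$, a subset of size $B$, writing $M':=\min_{a'\notin S}Y_{a'}$, and using that $\{\tilde{S}_t^*=S\}=\{Y_a<M'\text{ for all }a\in S\}$ with $M=M'$ on that event (ties have probability zero), this yields
\[
\mathbb{E}\big[\mathds{1}[\tilde{a}_t^{*,1}\notin\tilde{S}_t^*]\big]\ \le\ \sum_{S:\,|S|=B}\mathbb{E}\Big[\prod_{a\in S}\mathds{1}[\,M'-c_t(a)\le Y_a< M'\,]\Big].
\]

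\emph{Step 2: condition to factorize.} For a fixed $S$, condition the corresponding term on $\mathcal{F}_S:=\sigma(\{p(a'):a'\notin S\})$. Then $M'$ is $\mathcal{F}_S$-measurable while $\{Y_a:a\in S\}$ stay independent, so
\[
\mathbb{E}\Big[\prod_{a\in S}\mathds{1}[\,M'-c_t(a)\le Y_a< M'\,]\ \Big|\ \mathcal{F}_S\Big]\ =\ \prod_{a\in S}\Pr\big[\,M'-c_t(a)\le Y_a< M'\ \big|\ \mathcal{F}_S\big].
\]
Since $Y_a<M'\iff p(a)>C_{t-1}(a)-M'$ and $Y_a\ge M'-c_t(a)\iff p(a)\le C_{t-1}(a)-M'+c_t(a)$, memorylessness of $p(a)\sim\tfrac{1}{\epsilon}\mathrm{Exp}$ gives $\Pr[\,Y_a\ge M'-c_t(a)\mid Y_a<M',\mathcal{F}_S\,]\le 1-e^{-\epsilon c_t(a)}\le\epsilon c_t(a)$, the same bound holding in the degenerate case $C_{t-1}(a)-M'<0$. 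Thus each factor is at most $\epsilon c_t(a)\,\Pr[\,Y_a<M'\mid\mathcal{F}_S\,]$, and multiplying over $a\in S$ while using $c_t(a)\le1$ to bound $\prod_{a\in S}c_t(a)\le\min_{a\in S}c_t(a)=c_t(S)$ gives
\[
\prod_{a\in S}\Pr\big[\,M'-c_t(a)\le Y_a< M'\ \big|\ \mathcal{F}_S\big]\ \le\ \epsilon^{B}\,c_t(S)\,\Pr[\,\tilde{S}_t^*=S\mid\mathcal{F}_S\,].
\]

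\emph{Step 3: recombine.} Taking expectations (the constant $c_t(S)$ factors out) and summing over $S$, the Step 1 bound is at most $\sum_S\epsilon^{B}c_t(S)\Pr[\tilde{S}_t^*=S]=\mathbb{E}[\epsilon^{B}c_t(\tilde{S}_t^*)]$, as claimed. The main obstacle is Step 2: both $\tilde{S}_t^*$ and the threshold $M$ depend on \emph{all} the perturbations, so the events $\{Y_a\ge M-c_t(a)\}$, $a\in\tilde{S}_t^*$, are a priori neither independent nor individually controllable; first partitioning on the identity of $\tilde{S}_t^*$ and then conditioning only on the perturbations of its complement is precisely what freezes $M$ while leaving the relevant coordinates independent. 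The remaining points — ties among perturbed costs (probability zero, broken arbitrarily as in the algorithm) and the sign of $C_{t-1}(a)-M'$ in the memorylessness estimate — are routine and do not affect the bound.
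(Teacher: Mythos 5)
Your proof is correct and follows essentially the same route as the paper's: observe that the bad event forces every leader $a\in\tilde{S}_t^*$ to lie in the window $[M-c_t(a),M)$, freeze the threshold by conditioning on the non-leaders, factorize by independence, and apply memorylessness together with $1-e^{-\epsilon c}\le\epsilon c$ and $\prod_{a\in S}c_t(a)\le\min_{a\in S}c_t(a)$. The only cosmetic difference is that you partition on the identity of $\tilde{S}_t^*$ and condition on the raw perturbations of its complement (making the factorization immediate), whereas the paper conditions on the order statistics of the top $N-B$ arms and invokes its auxiliary independence lemma (Lemma~\ref{lem:indep}) to the same effect.
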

Again, when $B=1$ this argument and bound coincides with the argument given by \cite{DBLP:journals/jcss/KalaiV05}.

The final step is to combine Lemmas \ref{lem:FPML-overtaking-bound} and \ref{lem:FPML-perturb} to argue that \textbf{FPML} achieves expected regret at most $\mathop{\mathbb{E}}\left[\epsilon^{B} \sum_{t=1}^{T} c_{t}(\tilde{S}^{*}_t)\right]$ with respect to the perturbed cumulative cost $\tilde{C}_T$. Since $\max_{a \in \mathcal{A}} \mathop{\mathbb{E}}[p(a)] \leq \frac{1+\ln(N)}{\epsilon}$ we can argue we also achieve low expected regret with respect to the unperturbed cost $C_T$. In the setting of this paper, drawing new random perturbations $p_{t}(a)$ in each round is not strictly necessary (we can take $p_{t}(a)=p_{1}(a)$ for $t>1$), but it is necessary to achieve regret bounds when cost functions can depend on prior arm choices of the algorithm (the \textit{adaptive} adversarial setting).  In the setting of this paper where the costs are fixed, the expected regret in either case is the same.

\paragraph{Probabilistic guarantees:} One advantage of this proof technique is that the regret is bounded using the positive random variable $\sum_{t=1}^{T} \mathds{1}\left[\tilde{a}^{*,1}_{t} \not \in \tilde{S}_{t}^{*}\right]$. This means that one can apply methods like Markov inequality to give a probabilistic guarantee of small regret, which is substantially stronger than saying the regret is small in expectation.

\paragraph{Comments on settings of parameters:} When $B=1$ we recover the standard $\mathcal{O}(\sqrt{T\ln(N)})$ regret bound for the OLwE problem. For $B>1$, the regret growth rate as a function of the number of rounds is $T^{\frac{1}{B+1}}$. In particular, when $B=\Omega(\ln(T))$ grows slowly with the number of rounds, the expected regret becomes $\mathcal{O}(\ln(N))$ and does not grow with the number of rounds $T$. If we use a tighter inequality in the proof of Lemma \ref{lem:FPML-perturb}, it is possible to get constant expected regret when $B=\ln(T)\ln(A)$ grows slowly with the number of arms and rounds. 

\paragraph{Lower bounds:} A standard technique for constructing lower bounds in the online experts setting with $B=1$ is to consider costs which are i.i.d. Bernoulli$(p)$ \citep{lattimore2020bandit}. Unfortunately this technique fails when $B>1$ because the expected cost of the minimum of $B$ i.i.d. Bernoulli random variables is generally smaller than the expected cost of the best arm in hindsight unless $p$ is very close to $1$. We are able to show very weak lower bounds in the full feedback setting of $R_{T}^{*}=\Omega(\ln(N))$ for constant $B$ and $T= \Omega(\ln(N))$, but there is a substantial gap between this and the upper bound. We think constructing stronger lower bounds is an interesting problem for future work which may require new analysis techniques.  

\paragraph{Partial feedback:} The \textit{semi-bandit feedback} setting is a form of partial feedback where the algorithm only observes the individual costs of the arms it pulls. It can be shown that passing unbiased cost function estimates to \textbf{FPML} results in a similar regret bound in the semi-bandit feedback setting; the result is specific to \textbf{FPML} and using unbiased cost functions does not generally work for any $R^{*}_{T}$-minimizing algorithm when $B>1$ because of the non-linearity of the cost function. In the case of \textbf{FPML} this is not an issue because the same bounding technique using Lemma \ref{lem:FPML-overtaking-bound} holds in expectation when using unbiased cost estimators. A naïve way to generate unbiased cost estimates in this setting is to use an additional arm to uniformly sample costs; in Section \ref{sec:experiments} we explore geometric sampling \citep{neu2013efficient} for getting unbiased cost estimates which is effective in practice.

\begin{proposition}\label{prop:full-to-partial}
Define the algorithm \textbf{FPML-partial} which simulates \textbf{FPML}, passing it unbiased cost estimates $\hat{c}_{t} \in [0,K]$ at round $t \in [T]$,\footnote{More formally: $\mathop{\mathbb{E}}[\hat{c}_{t}(\cdot) \mid \mathcal{F}_{t-1}]=c_{t}(\cdot)$ where $\mathcal{F}_{t-1}$ is the $\sigma$-algebra generated by all the randomness up to and including round $t-1$.} and copies the arm choices of \textbf{FPML}. Then we have $R_{T}^{*}(\text{\textbf{FPML-partial}})\leq \ln(N)/\epsilon + T (1-\e^{-K\epsilon})^B$. For $\epsilon = \left(\frac{\ln(N)}{TK^B}\right)^\frac{1}{B+1}$, $R_{T}^{*}(\text{\textbf{FPML-partial}})\leq \mathcal{O}( T^{\frac{1}{B+1}} (K\ln(N))^{\frac{B}{B+1}})$.
\end{proposition}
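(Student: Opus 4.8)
The plan is to re-run the three-step argument behind Theorem~\ref{thm:FPML-upper-bound}, but on the random sequence of estimated cost functions $\hat{c}_1,\dots,\hat{c}_T\in[0,K]$ instead of the true ones, and then integrate out the estimator randomness. First I would condition on a realization of $\hat{c}_1,\dots,\hat{c}_T$. Since \textbf{FPML-partial} copies the arm choices that \textbf{FPML}($B,\epsilon$) makes on this fixed sequence, Lemma~\ref{lem:FPML-overtaking-bound} applies to it unchanged — its proof uses only the ordering of cumulative costs, not the $[0,1]$ range — so the cost of the chosen sets, measured against a fixed arm in the $\hat{c}$-metric, is controlled by the number of rounds on which the (perturbed) estimated leader leaves the estimated top-$B$, up to a $\max_a p(a)$ correction coming from the perturbation.

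Second, I would redo the computation behind Lemma~\ref{lem:FPML-perturb} with $\hat{c}_t$ in place of $c_t$. The only change is that the memorylessness estimate becomes $\mathbb{P}[p(a)<v+\hat{c}_t(a)\mid p(a)\geq v]=1-e^{-\epsilon\hat{c}_t(a)}\leq 1-e^{-\epsilon K}$ (using $\hat{c}_t(a)\leq K$), and the $B$ relevant ``catch-up'' events stay conditionally independent given the perturbed cumulative cost of the $(B+1)$-st arm, so the probability of an overtake in any given round is at most $(1-e^{-\epsilon K})^B$. Summing over the $T$ rounds replaces the ``$\epsilon^B\sum_t c_t(S_t)$''-style term from the proof of Theorem~\ref{thm:FPML-upper-bound} by the cruder but cleaner bound $T(1-e^{-\epsilon K})^B$, while $\mathbb{E}[\max_a p(a)]\leq \ln(N)/\epsilon$ (up to the usual harmonic-number slack) supplies the other term.

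Third — the step I expect to be the real work — I would take the expectation over the estimator randomness to pass from an $\hat{c}$-statement to a bound on the true regret $R_T^*$. A black-box reduction fails here because $\mathbb{E}[\min_{a\in\hat{S}_t}\hat{c}_t(a)\mid\mathcal{F}_{t-1}]\leq \min_{a\in\hat{S}_t}c_t(a)$ can strictly underestimate the cost actually incurred; this is precisely the non-linearity flagged after Lemma~\ref{lem:FPML-perturb}. What rescues the argument is that \textbf{FPML}'s regret is, via Lemma~\ref{lem:FPML-overtaking-bound}, bounded by a sum of overtaking \emph{indicators} (whose expectations were controlled in step two) against the single arm $a^*$ that is best in hindsight: since the adversary is oblivious $a^*$ is deterministic, so $\mathbb{E}[\sum_t\hat{c}_t(a^*)]=\sum_t c_t(a^*)$ exactly with no minimum involved, and the telescoping of Lemma~\ref{lem:FPML-overtaking-bound} goes through in expectation. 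Getting the bookkeeping right — which expectation is taken in which order, perturbations versus estimators, and using fresh per-round perturbations so the estimates and the choices do not correlate in a way that breaks the conditional-independence argument of step two — is the delicate point. Granting that, combining the three pieces yields $R_T^*(\text{\textbf{FPML-partial}})\leq \ln(N)/\epsilon + T(1-e^{-K\epsilon})^B$, and substituting $\epsilon=(\ln(N)/(TK^B))^{1/(B+1)}$ together with $1-e^{-K\epsilon}\leq K\epsilon$ balances the two terms to give $\mathcal{O}(T^{\frac{1}{B+1}}(K\ln N)^{\frac{B}{B+1}})$.
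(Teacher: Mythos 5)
Your architecture (re-run the two FPML lemmas on the estimated costs, then de-bias) matches the paper's, and your step two is exactly right: conditional independence given the $(B{+}1)$-st perturbed cumulative cost plus memorylessness gives the per-round overtake probability $(1-\e^{-K\epsilon})^B$. The gap is in step three. You correctly name the obstruction, $\E[\min_{a\in S_t}\hat{c}_t(a)\mid\mathcal{F}_{t-1}]\le c_t(S_t)$, but the rescue you offer only de-biases the \emph{comparator}: since $a^*$ is a single deterministic arm, $\E[\hat{C}_T(a^*)]=C_T(a^*)$ indeed holds. It does nothing for the \emph{algorithm's} side of the ledger. Applying \cref{lem:FPML-overtaking-bound} to the realized sequence $\hat{c}_1,\dots,\hat{c}_T$, as your step one does, bounds $\sum_t\hat{c}_t(S_t)-\hat{C}_T(a^*)$; taking expectations then controls $\E[\sum_t\hat{c}_t(S_t)]$, and the very inequality you quote says this quantity is a \emph{lower} bound on $\E[\sum_t c_t(S_t)]$ --- so no bound on the true incurred cost follows. (A secondary issue: the increment bound in \cref{lem:FPML-overtaking-bound} uses costs in $[0,1]$; on $[0,K]$-valued estimates it degrades to $K\cdot\mathds{1}[\cdot]$, which would cost an extra factor of $K$ on the $T(1-\e^{-K\epsilon})^B$ term even if the conversion worked.)

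The paper's fix is to never pass through $\hat{c}_t(S_t)$ at all. It tracks the hybrid quantity $R'_i=\sum_{t\le i}c_t(S_t)-\min_a\hat{C}^*_i(a)$ --- true cost for the algorithm, perturbed \emph{estimated} cost for the comparator --- whose round-$i$ increment is $c_i(S_i)-\hat{c}_i(a_i^*)$ plus a nonpositive telescoping term, $a_i^*$ being the current estimated leader. Unbiasedness is invoked for this single (random, data-dependent) arm to replace $\hat{c}_i(a_i^*)$ by $c_i(a_i^*)$ in conditional expectation, whereupon $c_i(S_i)-c_i(a_i^*)\le\mathds{1}[a_i^*\notin S_i]$ holds with genuine $[0,1]$ costs and no factor of $K$; the comparator is de-biased only at the very end via $\E[\min_a\hat{C}_T(a)]\le\min_a C_T(a)$ (Jensen --- essentially your deterministic-$a^*$ observation). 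One further correction: the paper reduces to a \emph{single} perturbation drawn once, independent of the estimates, via an exchange-of-expectations argument; it is this reduction, not fresh per-round noise, that licenses the memorylessness/conditional-independence step you use in step two.
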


\section{Generalized regret bounds for Online Submodular Function Maximization}
\label{sec:application}

\citet{DBLP:conf/nips/StreeterG08} considered the more general problem of Online Submodular Function Maximization which captures a number of previously studied problems as special cases. For the sake of emphasizing the key ideas, we restrict attention to the full feedback setting where each action has unit duration. The OSFM problem in this setting is as follows:

\begin{dfn}
    Define a \emph{schedule} to be a finite sequence of actions\footnote{In their original problem definition actions may each have a different associated \textit{duration}.} \(a \in \mathcal{A}\), and let \(\mathcal{S}\) be the set of all schedules; the \emph{length} \(\ell(S)\) of a schedule \(S \in \mathcal{S}\) is the number of actions it contains. Define a \emph{job} to be a function \(f : \mathcal{S} \to [0,1]\) such that for any schedules \(S_1,S_2 \in \mathcal{S}\) and any action \(a \in \mathcal{A}\):
    \begin{enumerate}
        \item \(f(S_1) \le f(S_1 \oplus S_2)\) and \(f(S_2) \le f(S_1 \oplus S_2)\) \textbf{(monotonicity)};
        \item \(f(S_1 \oplus S_2 \oplus \langle a \rangle) - f(S_1 \oplus S_2) \le f(S_1 \oplus \langle a \rangle) - f(S_1)\) \textbf{(submodularity)}.
    \end{enumerate}
\end{dfn}

\begin{dfn}[Online Submodular Function Maximization]
    The problem consists of a game with \(T\) rounds. We are given some fixed \(B > 0\) and at each round \(t \in [T]\) we must choose a schedule \(S_t \in \mathcal{S}\) with \(\ell(S_t) \le B\) to be evaluated by a job \(f_t\) which is only revealed after our choice. The goal is to maximize the cumulative output $\text{Reward}_{T}:=\sum_{t=1}^T f_{t}(S_t)$.
\end{dfn}

\citet{DBLP:conf/nips/StreeterG08} propose an online greedy algorithm \textbf{OG} which achieves the guarantee $(1-e^{-1})\text{OPT}(B)-\text{Reward}_{T} \leq \mathcal{O}(\sqrt{TB \ln (N)})$ in expectation, where $\text{OPT}(B)$ is the cumulative reward of the best fixed schedule of length $B$ in hindsight. In this section we explain how to use \textbf{FPML} to extend their algorithm to allow a trade-off between budget resources and regret bounds.

The algorithm \textbf{OG} has two key ideas. Suppose that we start with an empty schedule $S_{0}\coloneqq\emptyset$. The first idea is that if we knew $f_{1},\dots,f_{T}$ in advance, we could greedily construct $S_{i}\coloneqq S_{i-1} \oplus \langle\arg \max_{a_i \in \mathcal{A}} \sum_{t=1}^T f_t(S_{i-1} \oplus \langle a_i\rangle) -f_t(S_{i-1})\rangle$ for $i=1,\dots,B$, where $a_i$ is the best greedy arm in hindsight for greedy round $i$. It can be shown that submodularity then implies $(1-e^{-1})\text{OPT}(B) \leq \sum_{t=1}^T f(S_B)$. Since we don't know $f_1,\dots,f_T$ in advance, the second idea is to run $B$ copies of a $R_{T}^{*}$ regret minimizer, where the $i$\textsuperscript{th} copy tries to compete with achieving the same \textit{improvement} of cumulative reward as the best fixed greedy action $a_i$ in hindsight. The regret bound on the $i$\textsuperscript{th} copy with respect to the best greedy arm in hindsight is $\mathcal{O}(\sqrt{T\ln N})$; across the $B$ copies one can show the net regret compared to the offline greedy solution is bounded by $\mathcal{O}(\sqrt{TB\ln N})$, which is where the final bound comes from. In summary, \textbf{OG} works as follows: for each round $t \in [T]$, run $B$ greedy rounds. In greedy round $i=1,\dots,B$, pull the arm $a^{t}_{i}$ proposed by the $i$\textsuperscript{th} black box $R_{T}^{*}$-minimizer, set $S_{t,i}\coloneqq S_{t,i-1}\oplus \langle a^{t}_{i} \rangle$, and feed back the greedy rewards $r_{t,i}(a)=f_{t}(S_{t,i-1}\oplus \langle a\rangle)-f_{t}(S_{t,i-1})$ to the $i$\textsuperscript{th} black box.

We propose a hybrid version of \textbf{OG}, called \textbf{OG\textsubscript{hybrid}}, which for any budget $B$ allows us to compete asymptotically well against OPT($B'$) for any chosen $B' \leq B/\ln(T)$. The algorithm \textbf{OG\textsubscript{hybrid}} is based on the following two changes to \textbf{OG}:
\begin{enumerate}
    \item Instead of having $B$ greedy rounds, we have  $B'\ln(T)$ greedy rounds. One can show that the extra factor of $\ln(T)$ allows one to drop the $(1-e^{-1})$ term in the regret bound.
    \item Instead of running a one-arm-pulling $R_{T}^{*}$ minimizer in each greedy round, we run \textbf{FPML} which pulls $\lfloor B/B'\ln(T)\rfloor$ arms. This allows us to improve the regret bound for each of the $B'\ln(T)$ $R_{T}^{*}$-minimizers and directly translates to a tighter overall regret bound.
\end{enumerate}
Besides these two changes, the algorithm is identical to that in \cite{DBLP:conf/nips/StreeterG08}.

More generally, let \textbf{OG\textsubscript{hybrid}}$(B,\tilde{B})$ denote the algorithm where each \textbf{FPML} box has a budget of $\tilde{B}$ and there are $B/\tilde{B}$ greedy rounds, so that the total number of arms pulled in each round is $B$. This algorithm is a hybrid of \textbf{OG} and \textbf{FPML} in the sense that \textbf{OG\textsubscript{hybrid}}$(B,1)$ is \textbf{OG} with \textit{Follow the Perturbed Leader} as the $R_{T}^{*}$-minimizing subroutine. On the other hand, \textbf{OG\textsubscript{hybrid}}$(B,B)$ is \textbf{FPML}. Varying $\tilde{B}$ allows us to interpolate between these two algorithms by varying the budget we give to the \textbf{FPML} subroutines. 

On a technical note, because we are pulling $\tilde{B} \geq 1$ arms for each greedy choice, we require a slight strengthening on the monotonicity condition which is common to many practical applications including the experiments we consider in the next section:

\begin{assumption} \label{additional_assumption}
    In addition to monotonicity and submodularity, each job \(f : \mathcal{S} \to [0,1]\) also satisfies \(f(S_1 \oplus S_2 \oplus S_3) \ge f(S_1 \oplus S_3)\) for any schedules \(S_1,S_2,S_3 \in \mathcal{S}\).
\end{assumption}

We can then give the following bounds:

\begin{theorem} \label{thm:streeter_generalized}
    For any choices of $B',\tilde{B}$, under \cref{additional_assumption} and with budget \(B \coloneqq \lceil B' \widetilde{B} \ln(T)\rceil \), algorithm \textbf{OG\textsubscript{hybrid}}(B,$\tilde{B}$) experiences expected regret
    \[\mathcal{O}\left(B' \ln(T) T^{1/(\widetilde{B}+1)} \ln (N)^{\widetilde{B}/(\widetilde{B}+1)}\right)\]
    relative to the best-in-hindsight fixed schedule of length \(B'\). In particular, if \(\widetilde{B} = \lceil\ln(T)\rceil\) (and so \(B = B' \lceil\ln(T)\rceil^2)\) the regret is bounded by \(\mathcal{O}(B'\ln(T) \ln(N) )\).
\end{theorem}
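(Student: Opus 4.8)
The plan is to follow the structure of the original \textbf{OG} analysis from \citet{DBLP:conf/nips/StreeterG08}, substituting \textbf{FPML} for the single-arm regret minimizer and inserting an extra $\ln(T)$ factor into the number of greedy rounds. Fix $B',\widetilde{B}$ and set $G \coloneqq B'\lceil\ln(T)\rceil$ to be the number of greedy rounds (so $B = \lceil B'\widetilde{B}\ln(T)\rceil \ge G\widetilde{B}$). Each greedy round $i \in [G]$ runs an independent copy of \textbf{FPML} with budget $\widetilde{B}$, receiving as costs the (negated, normalised) greedy rewards $r_{t,i}(a) = f_t(S_{t,i-1}\oplus\langle a\rangle) - f_t(S_{t,i-1})$, which lie in $[0,1]$ by monotonicity and the $[0,1]$-range of $f_t$. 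The first step is the purely offline submodularity argument: I would show that the best-in-hindsight length-$B'$ schedule $S^\star$ satisfies $(1 - (1-1/G)^G)\,\mathrm{OPT}(B') \le \sum_t f_t(G\text{-round offline greedy schedule})$ when the offline greedy is run for $G = B'\lceil\ln(T)\rceil$ rounds against the \emph{length-}$B'$ benchmark — this is the standard ``greedy run longer than the benchmark'' trick, and since $G/B' = \lceil\ln(T)\rceil$ we get $(1-1/G)^G \le e^{-\lceil\ln(T)\rceil} \le 1/T$, so the $(1-e^{-1})$ factor is replaced by $(1-1/T)$, contributing only an additive $O(\mathrm{OPT}(B')/T) = O(1)$ loss since $\mathrm{OPT}(B') \le T$.

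The second step is to pass from the offline greedy schedule to what \textbf{OG\textsubscript{hybrid}} actually plays. This is the telescoping/hybrid argument: define, for each $i$, the offline-greedy improvement $\Delta_i \coloneqq \max_{a\in\mathcal{A}} \sum_t r_{t,i}(a)$ (the value the $i$-th \textbf{FPML} box competes against, given the \emph{algorithm's own} prefixes $S_{t,i-1}$), and observe that $\sum_t f_t(S_{t,G}) = \sum_i \sum_t r_{t,i}(a^t_i)$ telescopes. The $i$-th \textbf{FPML} box, by \cref{thm:FPML-upper-bound} applied with $B \mapsto \widetilde{B}$ and the optimal $\epsilon$, guarantees $\sum_t r_{t,i}(a^t_i) \ge \Delta_i - 2T^{1/(\widetilde{B}+1)}(1+\ln N)^{\widetilde{B}/(\widetilde{B}+1)}$ in expectation (note \textbf{FPML} keeps the \emph{minimum} cost = \emph{maximum} reward among its $\widetilde{B}$ pulls, which is exactly what a greedy round wants). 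Here is where \cref{additional_assumption} enters: it ensures the ``realised'' greedy schedule the algorithm builds by keeping only the best of its $\widetilde{B}$ arms per round still has $f_t$-value at least that of the schedule formed by a single arm per round, so the standard induction relating $\sum_i \Delta_i$ to the offline greedy value of step one goes through. Summing the $G$ regret terms gives total regret $O\bigl(G \cdot T^{1/(\widetilde{B}+1)}(1+\ln N)^{\widetilde{B}/(\widetilde{B}+1)}\bigr) = O\bigl(B'\ln(T)\,T^{1/(\widetilde{B}+1)}\ln(N)^{\widetilde{B}/(\widetilde{B}+1)}\bigr)$, plus the $O(1)$ from step one.

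The specialisation is then immediate: setting $\widetilde{B} = \lceil\ln(T)\rceil$ makes $T^{1/(\widetilde{B}+1)} = T^{1/(\lceil\ln T\rceil+1)} \le T^{1/\ln T} = e = O(1)$, and $\ln(N)^{\widetilde{B}/(\widetilde{B}+1)} \le \ln(N)$, so the bound collapses to $O(B'\ln(T)\ln(N))$, with $B = B'\lceil\ln(T)\rceil^2$ as claimed.

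I expect the main obstacle to be step two — specifically, carefully matching the bookkeeping of \citet{DBLP:conf/nips/StreeterG08} when each greedy round produces $\widetilde{B}$ arms rather than one. The subtlety is that \textbf{FPML}'s guarantee is against the best \emph{single} arm for the \emph{algorithm's} prefix sequence, whereas the offline greedy comparison chains together single best arms for the offline greedy prefixes; reconciling these requires exactly the submodularity-plus-\cref{additional_assumption} argument, and one must check that feeding \textbf{FPML} the greedy rewards (which depend on the algorithm's random prefixes) is still a valid instance — here the oblivious-adversary remark in the footnote after the definition of $R_T^*$, together with the fact that \textbf{FPML}'s bound in \cref{thm:FPML-upper-bound} holds for arbitrary fixed cost sequences and hence in expectation over the prefixes by the tower property, is what makes it legitimate. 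The offline submodularity bound (step one) and the final arithmetic (step three) are routine.
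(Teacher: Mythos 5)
Your proof follows essentially the same route as the paper's: the paper likewise combines (i) the extended offline greedy analysis with $B'\lceil\ln T\rceil$ greedy rounds to replace $(1-e^{-1})$ by $(1-T^{-1})$ (its generalizations of Theorem 6 and of the \textbf{OG} regret bound), (ii) the per-greedy-round \textbf{FPML} 1-regret bound summed over the $L$ boxes, and (iii) \cref{additional_assumption} to lower-bound $f_t(S_t)$ by the value of the schedule formed from the best arm of each block. The only cosmetic difference is that the paper's \textbf{OG\textsubscript{hybrid}} computes the greedy rewards relative to the prefix of best-per-block arms $\langle a^*_{1,t},\ldots,a^*_{i-1,t}\rangle$ so that it can reduce verbatim to a ``fictional'' \textbf{OG} instance, whereas you compute them relative to the full prefix $S_{t,i-1}$ and telescope directly; both choices yield the claimed bound.
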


This bound allows us the flexibility of trading off a schedule budget $B$ for how tightly we would like to compete with the best fixed schedule of length $B' \leq B$ in hindsight. 

\paragraph{Partial feedback:} \citet{DBLP:conf/nips/StreeterG08} extend their algorithm to handle partial feedback by replacing the $R_{T}^{*}$ regret minimizers with the bandit algorithm \textbf{Exp3} \citep{auer2002nonstochastic} which only requires feedback on the arms which are pulled. Likewise, one can replace \textbf{FPML} with \textbf{FPML-partial} in \textbf{OG\textsubscript{hybrid}} to get an algorithm which gives regret bounds in a semi-bandit feedback setting (where we can observe $f_t(S)$ for any schedule $S$ consisting of actions which were pulled in round $t$). We empirically compare the partial feedback versions of \textbf{OG} and \textbf{OG\textsubscript{hybrid}} in the next section.

\section{Experiments: online hyperparameter optimization}\label{sec:experiments}

The problem of \textit{black-box optimization}---where a hidden function is to be minimized using as few evaluations as possible---has recently generated increased interest in the context of hyperparameter selection in deep learning \citep{snoek2012practical,liu2020towards,bouthillier2020survey}. As a result, the 2020 NeurIPS BBO Challenge \citep{turner2021bayesian} invited participants' optimizers to compete to find the best possible configurations of several ML models on a number of common datasets, given a limited budget of training cycles for each. One of the key findings was that sophisticated new algorithms are normally outperformed on average by techniques that ensemble existing methods \citep{liu2020gpu}, i.e. optimizers tend to have varying strengths and weaknesses that are suited to different task types. In a scenario where many hyperparameter selection problems are to be processed (e.g. in a data center) and limited computing resources are available, it may thus be desirable to learn over time how best to choose \(B\) optimizers to apply independently to each problem (e.g. to run in parallel on \(B\) available CPU cores). This is a natural partial feedback application of bandit algorithms that pull multiple arms per round and receive the best score found across the optimizers which are run. 

\subsection{Experimental setup}

We follow a similar approach to the NeurIPS BBO Challenge; the optimization problem at each round \(t \in [T]\) is to choose the hyperparameters of either a multi-layer perceptron (MLP) or a lasso classifier for one of 184 classification tasks from the Pembroke Machine Learning Benchmark \citep{olson2017pmlb} (so \(T=368\)). At each round the bandit algorithm must select \(B\) Bayesian black-box optimizers from a choice of 9 to run in parallel on the current problem; the received \textit{reward}\footnote{Here we use rewards to be consistent with the setting of Online Submodular Function Maximization.} for each optimizer was calculated as a \([0,1]\)-normalized measure of where the best training loss attained sits between \textbf{(a)} the expected best loss from a random hyperparameter search and \textbf{(b)} an estimate of the best possible loss attainable (the same approach used by the Bayesmark package \citep{bayesmark}). Rewards are only observed for the optimizers which are run (semi-bandit feedback). For each budget level $B=1,\dots,6$, we ran a benchmarking study comparing the performance of the following bandit algorithms in this setting: (1) \textbf{FPML-partial}$(B)$ from Section \ref{sec:FPML}, using geometric sampling   \citep{neu2013efficient} to construct unbiased cost estimates (see appendix for details). (2) \textbf{OG\textsubscript{hybrid}}$(B,\tilde{B})$ from Section \ref{sec:application}, with \textbf{FPML-partial} as a subroutine. We do this for varying values of $\tilde{B}$ to see how performance changes. (3) \textbf{OG}$(B)$, the partial feedback version of the original online greedy algorithm from \cite{DBLP:conf/nips/StreeterG08}. We benchmark performance against \textbf{BIH}$(B)$, the score of the Best fixed subset of size $B$ In Hindsight. In all cases the $\epsilon$ parameter for the bandit subroutines \textbf{FPML-partial} and \textbf{Exp3} was set to their theoretically optimal values given $B$, $N$ and $T$ without any fine-tuning (for \textbf{FPML-partial} we use Proposition \ref{prop:full-to-partial}). We run each algorithm setting 100 times to estimate the mean and standard deviation of the performance.

\subsection{Results and discussion}

\renewcommand{\c}[1]{#1}
\renewcommand{\d}[1]{#1}
\begin{table}
  \caption{Experimental results for $B=6$ and $B=4$.}
  \label{table:experimental results}
  \centering
  \begin{tabular}[t]{lll}
    \toprule
    \multicolumn{3}{c}{$B=6$}                   \\
    \cmidrule(r){1-3}
    Algorithm     & Mean Reward    & StD \\
    \midrule
    \textbf{BIH}$(6)$ & \c{0.901} & \d{NA} \\
        \textbf{FPML-partial}$(6)$ & \c{0.888} & \d{0.0072} \\
        \textbf{OG\textsubscript{hybrid}}$(6,3)$ & \c{0.836} & \d{0.0111} \\
        \textbf{OG\textsubscript{hybrid}}$(6,2)$ & \c{0.814} & \d{0.0143} \\
        \textbf{OG\textsubscript{hybrid}}$(6,1)$ & \c{0.785} & \d{0.0137} \\
        \textbf{OG} & \c{0.767} & \d{0.0157} \\
    \bottomrule
  \end{tabular}
  \quad
  \begin{tabular}[t]{lll}
    \toprule
    \multicolumn{3}{c}{$B=4$}                   \\
    \cmidrule(r){1-3}
    Algorithm     & Mean Reward    & StD \\
    \midrule
    \textbf{BIH}$(4)$ & \c{0.836} & \d{NA} \\
    \textbf{FPML-partial}$(4)$ & \c{0.813} & \d{0.0108} \\
    \textbf{OG\textsubscript{hybrid}}$(4,2)$ & \c{0.756} & \d{0.0149} \\
        \textbf{OG\textsubscript{hybrid}}$(4,1)$ & \c{0.716} & \d{0.0178} \\
        \textbf{OG}$(4)$ & \c{0.689} & \d{0.0151} \\
    \bottomrule
  \end{tabular}
  
\end{table}

 As expected, \textbf{OG\textsubscript{hybrid}}$(B,1)$ has very similar performance to \textbf{OG}$(B)$ in all cases because they implement essentially the same algorithm (the difference being due to different choices of one-arm pulling $R_{T}^{*}$ minimizers, \textbf{FPML-partial}$(1)$ and \textbf{Exp3}). However, we notice that in every instance, allocating more budget to the \textbf{FPML-partial} subroutine in \textbf{OG\textsubscript{hybrid}} (increasing $\tilde{B}$) while keeping the overall budget constant improved the average score performance. This means that \textbf{OG\textsubscript{hybrid}} was always at least as good as \textbf{OG} for all parameter settings, and that \textbf{FPML-partial} outperformed both of these algorithms in all cases. This is perhaps surprising, because \textbf{OG} and \textbf{OG\textsubscript{hybrid}} are designed to achieve low regret against the stronger benchmark of the best subset of $B'\geq 1$ arms in hindsight, while \textbf{FPML-partial} is only designed to achieve low regret with respect to the single best optimizer in hindsight. Towards explaining this observation, we notice that for this particular dataset, the best subset of \(B\) arms in hindsight happens to be very similar to the set of the individual best performing \(B\) arms in hindsight, and \textbf{FPML} achieves low regret with with respect to the latter (see \cref{prop:top_B_regret} below). This raises the interesting question of when certain regret objectives might be better than others in practice. \\

\begin{proposition} \label{prop:top_B_regret}
In the full feedback setting, the expected regret of \textbf{FPML} with \(B\) arms relative to the set of the individual best performing \(B\) arms in hindsight is at most \([1-B^{-1}+\ln(|\mathcal{A}|/B)]/\epsilon + T(1-\exp(-\epsilon B)).\)
\end{proposition}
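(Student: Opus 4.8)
The bound $[1-B^{-1}+\ln(|\mathcal{A}|/B)]/\epsilon + T(1-\e^{-\epsilon B})$ has exactly the shape of the intermediate bounds in \cref{thm:FPML-upper-bound} and \cref{prop:full-to-partial}, so the plan is to re-run the three ingredients behind \cref{thm:FPML-upper-bound} — the overtaking bound (\cref{lem:FPML-overtaking-bound}), the perturbation bound (\cref{lem:FPML-perturb}), and the final accounting of the perturbation magnitude — but with the single-best-arm benchmark replaced throughout by the set $G := \{a_T^{*,j}\mid j\in[B]\}$ of the $B$ arms of smallest cumulative cost. As in that proof, I would first reduce to analysing \textbf{FPML} with a single draw of the perturbations $p(a)\sim\tfrac{1}{\epsilon}\,\mathrm{Exp}$ (legitimate in the oblivious setting), writing $\tilde C_i(a)=C_i(a)-p(a)$, so that \textbf{FPML} plays $\tilde S_t^{*}=\{\tilde a_{t-1}^{*,j}\mid j\in[B]\}$ in round $t$.

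\textbf{Step 1 (combinatorial bound, analogue of \cref{lem:FPML-overtaking-bound}).} Here I would show that \textbf{FPML} pays strictly more than $\min_{a\in G}c_t(a)$ in round $t$ only on an explicit event $\mathcal{E}_t$: writing $g_t\in\argmin_{a\in G}c_t(a)$ for the benchmark arm active in round $t$ and $M_{t-1}$ for the $(B{+}1)$-th smallest value of $\tilde C_{t-1}(\cdot)$, the argument preceding \cref{lem:FPML-perturb} applied to $a=g_t$ shows that if $\tilde C_{t-1}(g_t)<M_{t-1}-c_t(g_t)$ then $g_t\in\tilde S_t^{*}$ and $c_t(\tilde S_t^{*})\le c_t(g_t)$. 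So $\mathcal{E}_t$ is (at most) the event that $g_t$ currently sits outside the top $B$ and is not caught by any arm of $\tilde S_t^{*}$, giving
\[
    \sum_{t=1}^T\bigl(c_t(\tilde S_t^{*})-\min_{a\in G}c_t(a)\bigr)\le\sum_{t=1}^T\mathds{1}[\mathcal{E}_t].
\]

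\textbf{Step 2 (probabilistic bound, analogue of \cref{lem:FPML-perturb}; the crux).} The goal is $\mathop{\mathbb{E}}\bigl[\sum_t\mathds{1}[\mathcal{E}_t]\bigr]\le T(1-\e^{-\epsilon B})$. In \cref{lem:FPML-perturb} the relevant event forces \emph{all $B$} of \textbf{FPML}'s arms to fail at once, which by memorylessness and independence contributes a product of $B$ small factors — hence the $\epsilon^B$ / $(1-\e^{-\epsilon})^B$-type bounds. Competing against a $B$-arm set is instead qualitatively "one arm re-entering a band", and the budget $B$ enters by \emph{widening that band}: the deficit by which an arm can trail the $B$-th-place threshold and still finish among the $B$ globally cheapest is controlled by $B$, because recovering requires being overtaken by $B$ distinct other arms whose cumulative costs only grow. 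Conditioning on $M_{t-1}$ and on the identity of $g_t$ and invoking memorylessness exactly as in \cref{lem:FPML-perturb}, but over a window of width $O(B)$ rather than $O(1)$, should give the per-round bound $1-\e^{-\epsilon B}$.

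\textbf{Step 3 (perturbation magnitude) and the main obstacle.} Finally, passing from cumulative costs read with the perturbations back to the unperturbed benchmark $G$ costs the expectation of a perturbation-magnitude term, as in the last step of the proof of \cref{thm:FPML-upper-bound}; since all $|\mathcal{A}|=N$ arms are perturbed, the threshold sits at the $B$-th order statistic and the benchmark has $B$ arms, this term is a difference of order statistics of $N$ i.i.d.\ $\tfrac{1}{\epsilon}\,\mathrm{Exp}$ variables, whose expectation is at most $\tfrac{1}{\epsilon}\bigl(1-B^{-1}+\ln(N/B)\bigr)$ by routine harmonic-sum estimates (for $B=1$ this recovers the $\ln(N)/\epsilon$ of the sharp \textbf{FPL} bound). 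I expect Step 2 to be the genuine obstacle: correctly isolating $\mathcal{E}_t$ and making rigorous why the budget appears as $\e^{-\epsilon B}$ rather than $\epsilon^B$ — this requires pushing the conditioning-and-memorylessness argument of \cite{DBLP:journals/jcss/KalaiV05} from "our $B$ arms all fail" to "one arm of the $B$-arm benchmark re-enters a band whose width scales with $B$", while handling the circular dependence of $M_{t-1}$ and of the membership $g_t\in\tilde S_t^{*}$ on $p(g_t)$.
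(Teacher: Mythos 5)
Your Step~3 and the overall shape of the argument do match the paper's (sketch) proof: both adapt the three-step \textbf{FPL} machinery, and both obtain the $[1-B^{-1}+\ln(N/B)]/\epsilon$ term from order statistics of the exponential perturbations. But there is a genuine gap in the combination of your Steps~1 and~2, and it is worse than the unfinished calculation you flag. In Step~1 you bound the regret by $\sum_t\mathds{1}[\mathcal{E}_t]$, where $\mathcal{E}_t$ essentially requires the \emph{hindsight} benchmark arm $g_t\in G$ to be absent from the current perturbed top-$B$; this discards exactly the telescoping cancellation that \cref{lem:FPML-overtaking-bound} exists to preserve, and the resulting event can have probability near $1$ for a constant fraction of rounds. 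Concretely: let the $B$ arms of $G$ have cost $1$ for $t\le T/2$ and $0$ afterwards, and let $B$ decoy arms have cost $0.6$ throughout. Then $G$ is the final top-$B$ set, but for all $t\in(T/2,\,5T/6)$ the decoys still lead the perturbed cumulative leaderboard with probability close to $1$, so $c_t(\tilde S_t^{*})=0.6>0=c_t(g_t)$ and $\mathcal{E}_t$ occurs; hence $\E[\sum_t\mathds{1}[\mathcal{E}_t]]=\Omega(T)$ even though the true regret is $\approx 0$ (FPML's first-half savings cancel its second-half losses). This also refutes the mechanism you propose in Step~2: an arm that will finish in the top $B$ can trail the current $B$-th-place threshold by $\Theta(T)$, not $O(B)$, since later costs can reshuffle the gaps arbitrarily even though cumulative costs are monotone.

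The paper's route keeps the event tied to the \emph{running} leaderboard: the event of interest is ``an action outside the algorithm's current top $B$ enters the running best $B'$-set,'' whose probability is controlled by whether at least $B'$ of the $B$ current perturbed leaders are ``safe'' in the sense of the proof of \cref{lem:FPML-perturb}. A binomial count over which leaders are safe gives per-round probability at most $\sum_{j=0}^{B'-1}\binom{B}{j}\e^{-j\epsilon}(1-\e^{-\epsilon})^{B-j}$, which for $B'=B$ collapses to $1-\e^{-\epsilon B}$ --- that is where the $T(1-\e^{-\epsilon B})$ term actually comes from, not from a widened overtaking band. To be fair, the paper itself only sketches how the analogue of the telescoping step is carried out for a set-valued (min-aggregated) benchmark, and that accounting is the part your write-up would still need to supply after switching to the correct event.
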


\paragraph{Synthetic tasks:} We also evaluated these partial feedback algorithms in a number of synthetic environments, exploring examples where \textbf{(a)} the optimal subset of \(B\) arms, \textbf{(b)} the subset of \(B\) arms chosen greedily, \textbf{(c)} the individual best performing \(B\) arms, perform in various ways relative to each other; \textbf{OG} approximates \textbf{(b)} and \textbf{FPML-partial} approximates \textbf{(c)}, so the closeness of either of these to \textbf{(a)} determines each algorithm's performance. We find that problem instances exist where \textbf{OG} outperforms \textbf{FPML-partial} and vice versa. See the appendix for details.

\section{Connections to other problems}\label{sec:lp}

\cite{DBLP:journals/toc/AroraHK12} surveys a wide variety of algorithmic problems and shows how they can all be solved using a $R_{T}^{*}$ minimizing subroutine in the standard OLwE setting with $B=1$. The purpose of this section is to highlight the relative ease with which algorithms like \textbf{FPML} can sometimes be \textit{plugged in} to existing algorithms which use an $R_{T}^{*}$ minimizer as black box subroutine, with little or no alteration to their proofs of correctness. We already saw an example of this in Section \ref{sec:application}, where the regret minimizing subroutine was replaced with \textbf{FPML}. The resulting algorithm gave improved regret guarantees at the cost of higher budget requirements, without changing the structure of the original proof given in \cite{DBLP:conf/nips/StreeterG08}. In this section we take Linear Programming as an example, and illustrate what its plug-in algorithm looks like. We are unsure whether the resulting algorithm is necessarily useful because it requires a more powerful oracle than the one supposed in \cite{DBLP:journals/toc/AroraHK12}; but we do think that exploring the algorithms which result from this plug-in technique more broadly may be an area of interest for future work. 

\subsection{Linear Programming}

We consider the Linear Programming (LP) problem from \cite{DBLP:journals/toc/AroraHK12}: Given a convex set $P$, an $n \times m$ matrix A with entries in $\mathbb{R}$, and a vector $b \in \mathbb{R}^n$, the task is to find an $x \in P$ such that $Ax \geq b$, or determine that no such $x$ exists. We assume  that we have an oracle which solves the following easier problem (where $A_i$ denotes the $i$\textsuperscript{th} row of $A$): 
\begin{dfn}
    A $(\rho,B)$-bounded oracle for $\rho \geq 0$ is an algorithm, which when given a joint distribution $d$ over $[n]^B$, finds an $x \in P$ such that
    \[
        \mathop{\mathbb{E}}_{(i_1,\dots,i_B) \sim d}\left[\min_{i \in \{i_1,\dots,i_B\}} A_{i}x-b_{i}\right] \geq 0
    \]
    or determines that no such $x$ exists. If an $x$ is found, then $\forall i \in [n], |A_{i} x-b_i | \leq \rho$. 
\end{dfn}

When $B=1$ this is a simplified version of the oracle defined in \cite{DBLP:journals/toc/AroraHK12}, and it is equivalent to an oracle which can find $x \in P$ which satisfies a single linear constraint $d^{\top} A x \geq d^{\top} b$ (when viewing $d$ as a vector in $\mathbb{R}^n$). When $B=n$, solving this problem can be as hard as solving the original LP problem by jointly choosing $i_j$ to have probability mass 1 on the $j$\textsuperscript{th} linear constraint. $1 < B < n$ represents an intermediate regime where the oracle needs to find an $x$ which `fools' the joint distribution $d$. For simplicity, we assume that we have access to an oracle which takes as input bounded cost functions $c_1,\dots,c_{t-1}$, and outputs the joint distribution $d_t$ over arms of \textbf{FPML} in round $t$ after observing cost functions $c_1,\dots,c_{t-1}$. In practice such an oracle could be achieved by e.g. sampling arm draws from \textbf{FPML} to approximate $d_t$. 

\begin{proposition}\label{prop:lp}
Let $\epsilon>0$. Suppose there exists a $(\rho,B)$-bounded oracle for the feasibility problem $\exists x \in P \text{ s.t. } Ax \geq b$. Then there is an algorithm which either finds an $x \in P$ s.t. $\forall i \in [n], A_ix \geq b_i -\epsilon$, or correctly concludes that the problem is infeasible. The algorithm makes at most $T=\left(\frac{1}{\epsilon}\right)^{\frac{B+1}{B}} (4\rho)^{\frac{B+1}{B}} (1+\ln(n))$ calls to the $(\rho,B)$-bounded oracle and \textbf{FPML} oracle, with a total runtime of $\mathcal{O}(T)$. 
\end{proposition}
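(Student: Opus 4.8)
To prove Proposition~\ref{prop:lp}, the plan is to instantiate the Arora--Hazan--Kale regret-to-feasibility reduction \citep{DBLP:journals/toc/AroraHK12} with \textbf{FPML} playing the role of the one-arm experts algorithm, taking the $N=n$ arms to be the $n$ rows of $A$. Run \textbf{FPML} (through its distribution oracle) for $T$ rounds with parameter $\eta$ (I write $\eta$ rather than $\epsilon$ to keep it distinct from the target accuracy $\epsilon$). In round $t$: (i) query the \textbf{FPML} oracle on the cost functions $c_1,\dots,c_{t-1}$ produced so far to obtain the distribution $d_t$ over $B$-subsets of $[n]$; (ii) call the $(\rho,B)$-bounded oracle on $d_t$; (iii) if it reports infeasibility, halt and output ``infeasible''; otherwise take the returned $x_t\in P$ and feed \textbf{FPML} the cost function $c_t(i):=\tfrac{1}{2\rho}(A_ix_t-b_i)+\tfrac12$, which lies in $[0,1]$ exactly because $|A_ix_t-b_i|\le\rho$. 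If all $T$ rounds complete, output $\bar x:=\tfrac1T\sum_{t=1}^Tx_t\in P$ (convexity of $P$). I would take $\eta=(\epsilon/2\rho)^{1/B}$ and $T=(4\rho/\epsilon)^{(B+1)/B}(1+\ln n)$, assuming $\epsilon<2\rho$ (otherwise a single oracle call already returns an $\epsilon$-feasible point since $|A_ix-b_i|\le\rho$).

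First I would dispose of the ``infeasible'' branch: any $x^\star\in P$ with $Ax^\star\ge b$ satisfies $\min_i(A_ix^\star-b_i)\ge 0$ pointwise, hence is a valid oracle answer for every $d_t$, so a report of infeasibility correctly certifies that no feasible point exists. For the main case, where all $T$ rounds complete, the crucial observation is that each $d_t$ is a deterministic function of $c_1,\dots,c_{t-1}$ and each $c_t$ is a deterministic function of $x_t$, so (fixing any deterministic tie-breaking in the oracle, or conditioning on its coins) the cost sequence $c_1,\dots,c_T$ is \emph{fixed} and independent of any realized \textbf{FPML} sample. Theorem~\ref{thm:FPML-upper-bound}, which is stated for an oblivious cost sequence, therefore applies to it with parameter $\eta$: letting $S_t$ denote a hypothetical $B$-subset drawn from $d_t$, and using that $\min_{i}\sum_t c_t(i)$ is deterministic, it gives
\[
    \min_{i\in[n]}\sum_{t=1}^Tc_t(i) \ge (1-\eta^B)\,\mathbb{E}\!\left[\sum_{t=1}^Tc_t(S_t)\right]-\frac{1+\ln n}{\eta}.
\]
The $(\rho,B)$-bounded oracle guarantee, rewritten through $A_ix_t-b_i=2\rho\,c_t(i)-\rho$, says precisely that $\mathbb{E}_{S_t\sim d_t}[\min_{i\in S_t}c_t(i)]\ge\tfrac12$ for each $t$, so $\mathbb{E}[\sum_tc_t(S_t)]\ge T/2$ by linearity.

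To close, I would substitute the per-constraint identity $\sum_{t=1}^Tc_t(i)=\tfrac T2+\tfrac{T}{2\rho}(A_i\bar x-b_i)$ and the bound $\mathbb{E}[\sum_tc_t(S_t)]\ge T/2$ into the displayed inequality and rearrange, obtaining
\[
    \min_{i\in[n]}(A_i\bar x-b_i) \ge -\rho\,\eta^B-\frac{2\rho(1+\ln n)}{T\eta}.
\]
With $\eta=(\epsilon/2\rho)^{1/B}$ the first term equals $-\epsilon/2$, and with $T=(4\rho/\epsilon)^{(B+1)/B}(1+\ln n)\ge 4\rho(1+\ln n)/(\epsilon\eta)$ the second is at least $-\epsilon/2$, so $A_i\bar x\ge b_i-\epsilon$ for all $i$. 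Each round makes one call to each oracle plus $\mathcal{O}(1)$ bookkeeping, giving total runtime $\mathcal{O}(T)$. I expect the only genuinely non-routine step to be the determinism remark that lets the oblivious-adversary Theorem~\ref{thm:FPML-upper-bound} be applied despite the adaptive way the costs are generated; the remainder is the standard regret-to-feasibility argument together with the bookkeeping of balancing $\eta^B$ against $1/(T\eta)$ so as to reproduce the stated constant $(4\rho)^{(B+1)/B}$.
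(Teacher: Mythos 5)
Your proposal is correct and follows essentially the same route as the paper's proof: run the FPML/oracle loop, feed back the affinely rescaled constraint violations as costs, invoke Theorem~\ref{thm:FPML-upper-bound} on the resulting (fixed, oblivious) cost sequence, use the $(\rho,B)$-bounded oracle guarantee to lower-bound the algorithm's expected cost, and average the $x_t$. The only differences are presentational --- you work from the unoptimized form of the regret bound and balance $\eta^B$ against $1/(T\eta)$ by hand, and you make explicit the determinism remark and the $[0,1]$ rescaling that the paper leaves implicit --- and these land on the same value of $T$.
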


The proof technique for general $B \geq 1$ is essentially identical to the proof given in \cite{DBLP:journals/toc/AroraHK12} for $B=1$ except that we are able to use a stronger bound on $R_{T}^{*}$ (see appendix for details).  

\paragraph{Comment on bound:} When $B=1$, we require $\Omega((\frac{1}{\epsilon})^2)$ steps in order to find an $x$ which is $\epsilon$-close to satisfying the constraint. The $(\frac{1}{\epsilon})^2$ term comes from the fact that when $B=1$, the average regret for OLwE is $\Omega(\sqrt{T}/T)=T^{-\frac{1}{2}}$ , so it takes $T \geq (\frac{1}{\epsilon})^2$ steps for the average regret to be $\leq \epsilon$. The quadratic dependence on an accuracy parameter is therefore common in many applications which use OLwE with $B=1$ as a subroutine (including Boosting \citep{DBLP:journals/ml/Schapire90} and solving zero sum games \citep{freund1999adaptive}). For general $B\geq 1$, we only require $\mathcal{O}((\frac{1}{\epsilon})^{\frac{B+1}{B}})$ steps (suppressing terms related to $n$ and $\rho$) for the average regret to be $\leq \epsilon$. In the case of Linear Programming, this is at the expense of requiring a stronger oracle for the problem.

\section{Conclusion}

This paper presented a new algorithm, Follow the Perturbed Multiple Leaders, which allows one to directly trade off budget constraints for bounds on regret. We showed how \textbf{FPML} can be used as a subroutine to generate new algorithms for Online Submodular Function Optimization and Linear Programming which trade off resources and oracle power for improved performance guarantees. 

We also highlight a number of areas for future work: (a) \textbf{Lower bounds:} Can we reduce the gap between the upper and lower bounds of this problem? Improving lower bounds may require new techniques than are traditionally used in the OLwE setting. (b) \textbf{Plug-in algorithms:} Are there other cases where using \textbf{FPML} in existing algorithms can lead to new theoretical results? And when are these new algorithms practically useful? (c) \textbf{Which regret benchmarks are useful in practice:} The experiments of Section \ref{sec:experiments} showed that algorithms designed to minimize regret with respect to a single arm can sometimes in practice outperform algorithms designed to minimize regret with respect to the stronger benchmark of the best subset of arms. Are certain regret objectives better than others in different practical applications?

\bibliography{references}

\newpage

\section*{Checklist}

\begin{enumerate}

\item For all authors...
\begin{enumerate}
  \item Do the main claims made in the abstract and introduction accurately reflect the paper's contributions and scope?
    \answerYes{}
  \item Did you describe the limitations of your work?
    \answerYes{Section \ref{sec:experiments} discusses when the new algorithms perform better or worse relative to existing work. Section \ref{sec:lp} acknowledges that plug-in reductions with FPML do not necessarily result in new algorithms which are useful in practice.} 
  \item Did you discuss any potential negative societal impacts of your work?
    \answerNA{The contributions are primarily theoretical/related to optimization in general.}
  \item Have you read the ethics review guidelines and ensured that your paper conforms to them?
    \answerYes{}
\end{enumerate}

\item If you are including theoretical results...
\begin{enumerate}
  \item Did you state the full set of assumptions of all theoretical results?
    \answerYes{The setting and assumptions are fully defined in the main body of the paper.}
        \item Did you include complete proofs of all theoretical results?
    \answerYes{Every proposition, lemma and theorem in the main body of the paper has a proof in the appendix. The main body of the paper is primarily used for communicating the high level intuition behind the proofs.}
\end{enumerate}

\item If you ran experiments...
\begin{enumerate}
  \item Did you include the code, data, and instructions needed to reproduce the main experimental results (either in the supplemental material or as a URL)?
    \answerYes{Included in supplementary material. We will also provide a github link after review.}
  \item Did you specify all the training details (e.g., data splits, hyperparameters, how they were chosen)?
    \answerYes{All choices of algorithm parameters are explained in the main body of the paper. Further details can be found in the appendix.}
        \item Did you report error bars (e.g., with respect to the random seed after running experiments multiple times)?
    \answerYes{Each algorithm variant in Section \ref{sec:experiments} was run multiple times and standard deviations of results are included.}
        \item Did you include the total amount of compute and the type of resources used (e.g., type of GPUs, internal cluster, or cloud provider)?
    \answerNA{This paper is about benchmarking algorithm performance on idealized standardized budgets. Actual compute time is application specific and not central to the contributions of the paper.}
\end{enumerate}

\item If you are using existing assets (e.g., code, data, models) or curating/releasing new assets...
\begin{enumerate}
  \item If your work uses existing assets, did you cite the creators?
    \answerYes{All prior datasets and algorithms are cited.}
  \item Did you mention the license of the assets?
    \answerYes{A permissive license is included with the code in the supplementary material, and will be made available publicly after review.}
  \item Did you include any new assets either in the supplemental material or as a URL?
    \answerYes{Yes, in the supplemental material we provide code for replicating the experiments in Section \ref{sec:experiments}}. 
  \item Did you discuss whether and how consent was obtained from people whose data you're using/curating?
    \answerNA{Consent was not required and the data used is publicly available for use as a benchmarking dataset by the academic community.}
  \item Did you discuss whether the data you are using/curating contains personally identifiable information or offensive content?
    \answerNA{This is not applicable to the dataset considered.}
\end{enumerate}

\item If you used crowdsourcing or conducted research with human subjects...
\begin{enumerate}
  \item Did you include the full text of instructions given to participants and screenshots, if applicable?
    \answerNA{}
  \item Did you describe any potential participant risks, with links to Institutional Review Board (IRB) approvals, if applicable?
    \answerNA{}
  \item Did you include the estimated hourly wage paid to participants and the total amount spent on participant compensation?
    \answerNA{}
\end{enumerate}

\end{enumerate}

\newpage

\appendix

\section{Section 2: Follow the Perturbed Multiple Leaders}

\paragraph{Proof of Proposition \ref{prop:worst-case-deterministic}}

\begin{proof}
Construct a cost sequence inductively for $t=1,2,\dots,T$: Let $S_t \subset \mathcal{A}$ be the deterministic choice of arms the bandit algorithm chooses for round $t$ given the previously chosen cost functions $c_{1},\dots,c_{t-1}$. Now choose $c_{t}(a)=1$ if $a \in S_t$, and $c_{t}(a)=0$ otherwise. Then the bandit algorithm achieves cost $T$. The total cost summed over all arms is $\sum_{t=1}^T |S_{t}|\leq  BT$, so there must exist at least one $a' \in \mathcal{A}$ such that $\sum_{t=1}^{T} c_{t}(a') \leq \frac{BT}{N}$. Thus $R_{T}^{*} \geq T-\frac{BT}{N}=(1-\frac{B}{N})T$. 
\end{proof}

\paragraph{Proof of Lemma \ref{lem:FPML-overtaking-bound}}

\begin{proof}
Let $R_{i}=\sum_{t=1}^{i} c_{t}(S_t^{*})-\min_{a^{*} \in \mathcal{A}} \sum_{t=1}^{i} c_{t}(a^{*})$ be the regret at the end of round $i$. Then the increase in regret in round $i$ is

\begin{align*}
    r_{i}&:=R_{i}-R_{i-1}\\
    &=\sum_{t=1}^{i}\left(c_{t}(S_{t}^{*})-c_{t}(a^{*,1}_{i})\right)-\sum_{t=1}^{i-1}\left(c_{t}(S_{t}^{*})-c_{t}(a^{*,1}_{i-1})\right)\\
    &=c_{i}(S_{i}^{*}) -c_{i}(a^{*,1}_{i}) +\left(\sum_{t=1}^{i-1}c_{t}(a^{*,1}_{i-1})-\sum_{t=1}^{i-1} c_{t}(a^{*,1}_{i})\right) \\
    &\leq c_{i}(S_{i}^{*})-c_{i}(a^{*,1}_{i})\\
    & \leq \mathds{1}[a^{*,1}_{i} \not \in S_{i}^{*}]
\end{align*}

and the result follows by evaluating $\sum_{t=1}^{T} r_{t}$. 
\end{proof}

\begin{lemma}(Proof of independence for Lemma \ref{lem:FPML-perturb}) \label{lem:indep}
 Let $X_{1},\dots,X_{K}$ be jointly independent continuous random variables. Let $i_{1},\dots,i_{k}$ and $v_{i_1},\dots,v_{i_k}$ be the indices and values of the largest $k<K$ random variables, and let $X:=\{X_i|i \not \in \{i_1,\dots,i_k\}$ be the smallest $K-k$ random variables. Then conditional on $(i_j,v_{i_j})_{j \in [k]}$, the values of each $X_i \in X$ are jointly independent. Moreover, the marginal distribution $X_{i}|(i_j,v_{i_j})_{j \in [k]}$ for $i \not \in \{i_1,\dots,i_k\}$ is $X_{i}|X_{i} \leq \min_{j \in [k]} v_{i_j}$.
\end{lemma}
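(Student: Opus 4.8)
The statement is a general fact about order statistics of independent continuous random variables, so the plan is to reduce it to a single conditioning computation and then iterate. First I would set up notation carefully: write $X_{(1)} \ge X_{(2)} \ge \dots \ge X_{(K)}$ for the decreasing order statistics, so that $(i_1,v_{i_1}),\dots,(i_k,v_{i_k})$ records the positions and values of $X_{(1)},\dots,X_{(k)}$, and $X$ is the collection $\{X_i : i \notin \{i_1,\dots,i_k\}\}$, which are exactly the variables realizing $X_{(k+1)},\dots,X_{(K)}$. Because the $X_i$ are continuous, ties occur with probability zero, so the map from a sample to $(\text{positions and values of the top } k; \text{the remaining } K-k \text{ values as a labelled tuple})$ is well defined almost surely.

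The core step is a density computation. I would write the joint density of $(X_1,\dots,X_K)$ as $\prod_i f_i(x_i)$ by independence, then change to the coordinates consisting of (a) the ordered pairs $(i_1,v_{i_1}),\dots,(i_k,v_{i_k})$ and (b) the labelled values $(x_i)_{i \notin \{i_1,\dots,i_k\}}$ of the remaining variables. On the event that index set $\{i_1,\dots,i_k\}$ is the top-$k$ set with those values, the joint density factorizes as
\[
    \Big(\prod_{j=1}^{k} f_{i_j}(v_{i_j})\Big)\cdot \prod_{i \notin \{i_1,\dots,i_k\}} f_i(x_i)\,\mathds{1}\big[x_i \le \min_{j\in[k]} v_{i_j}\big].
\]
Conditioning on the top-$k$ data fixes the first product and the threshold $m := \min_{j \in [k]} v_{i_j}$, so the conditional density of $(x_i)_{i \notin \{i_1,\dots,i_k\}}$ is proportional to $\prod_i f_i(x_i)\mathds{1}[x_i \le m]$, which is a product of one-dimensional densities. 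This simultaneously gives joint independence and identifies each marginal as the law of $X_i$ conditioned on $X_i \le m$, which is exactly the claim. (One should note $m = v_{i_k}$, the $k$-th largest value, but writing it as $\min_{j\in[k]} v_{i_j}$ matches the lemma statement.)

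I expect the main obstacle to be making the conditioning rigorous rather than the algebra itself: the conditioning event $\{(i_j,v_{i_j})_{j\in[k]} = \text{given data}\}$ has probability zero, so "conditional on $(i_j,v_{i_j})_{j\in[k]}$" must be interpreted via conditional densities / disintegration, and one has to be careful that the positions $i_1,\dots,i_k$ are discrete while the values $v_{i_j}$ are continuous — so the conditioning is on a mixed discrete-continuous random element. The clean way to handle this is to condition first on the (discrete) identity of the top-$k$ set and its ordering, which only requires conditioning on a positive-probability event, and then on that event write down the joint density of $(v_{i_1},\dots,v_{i_k}, (x_i)_{i\notin\{i_1,\dots,i_k\}})$ and read off the conditional density of the last block given the first block by the density factorization above. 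Alternatively, one can avoid densities entirely by verifying the defining property directly: for test functions $g,h$, show $\E[g((i_j,v_{i_j})_j)\,\prod_{i} h_i(X_i)\mathds{1}[\text{top-}k \text{ set}]] $ factors appropriately, using only that $\P[X_i \le m \mid m] $ has the stated form by independence. I would present the density version as the main argument since it is the most transparent, flagging the continuity hypothesis as exactly what rules out ties and makes the coordinate change a.s. valid.
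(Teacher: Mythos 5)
Your proposal is correct and follows essentially the same route as the paper: both write the joint density as a product (by independence), observe that on the event identifying the top-$k$ indices and values the density restricted to the remaining variables factorizes as $\prod_i f_i(x_i)\mathds{1}[x_i \le M]$ with $M = \min_{j\in[k]} v_{i_j}$, and read off joint independence and the truncated marginals from that factorization. Your additional care about making the measure-zero conditioning rigorous (conditioning first on the discrete top-$k$ identity, then disintegrating over the values) is a refinement the paper elides, but it does not change the argument.
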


\begin{proof}
Let $M:= \min_{j \in [k]} v_{i_j}$. The conditional joint density function is

\begin{align*}
    f(X_1,\dots,X_K|(i_j,v_{i_j})_{j \in [k]})&\propto f(X \land (i_j,v_{i_j})_{j \in [k]})\\
    &= \prod_{j \in [K]-\{i_1,\dots,i_k\} } f(X_j) \prod_{j \in \{i_1,\dots,i_k\} } f(X_{j}=v_j) \prod_{j \in [K]-\{i_1,\dots,i_k\} }\mathds{1}[X_{j}\leq M]\\
    &\propto \prod_{j \in [K]-\{i_1,\dots,i_k\} }f(X_{j})\mathds{1}[X_{j}\leq M]\\
\end{align*}

i.e. the joint density factorizes for each $X_j$ (which implies joint independence), and marginally the density for $X_j \in X$ is $\propto f(X_{j})\mathds{1}[X_{j}\leq M]$ which gives the required result. 
\end{proof}

\textbf{Proof of Lemma \ref{lem:FPML-perturb}}

\begin{proof}
Fix a round $t$. Consider the jointly independent random variables $X_{a}=\tilde{C}_{t-1}(a)$ for $a \in \mathcal{A}$. Condition on the values and identities of the $N-B$ largest of these random variables, i.e. condition on $E=\{(X_{a^{*,j}_{t-1}},a^{*,j}_{t-1})\}_{j=B+1}^{N}$, and let $M=X_{a^{*,B+1}_{t-1}}$ be the minimum perturbed cost among these non-leading arms. Impose an ordering on $\mathcal{A}$ and let $l_1,\dots,l_B\in\mathcal{A}-\{a^{*,j}\}_{j=B+1}^{N}$ be the remaining arms (the top $B$ leaders) ordered lexicographically (i.e. not necessarily in order of cumulative perturbed cost). Then it can be shown that the distribution of the random variables $X_{l_1},\dots,X_{l_B}$ conditioned on $E$ is jointly independent, and the marginal distribution of $X_{l_j}$ given $E$ is $X_{l_j}|(X_{l_j} \leq M)$ (see Lemma \ref{lem:indep}). Now observe that if $X_{l_j}<M-c_{t}(l_j)$ for any $j \in [B]$, then the event $(\tilde{a}^{*,1}_{t} \not \in \tilde{S}_{t}^{*})$ is impossible. This is because $l_j \in \tilde{S}_{t}^{*}$, but for any $a \not \in S_{t}^{*}$, $\tilde{C}_{t}(a) \geq M$ but $\tilde{C}_{t}(l_j)=X_{l_j}+c_{t}(l_j)<M$ (i.e. $l_j$ cannot be overtaken by any non-top-$B$-leader in round $t$). Therefore we have

\begin{align}
    \mathop{\mathbb{E}}\left[\mathds{1}[\tilde{a}^{*,1}_{t} \not \in \tilde{S}_{t}^{*}]|E\right] & \leq P\left[\bigwedge_{j=1}^B \lnot(X_{l_j}<M-c_{t}(l_j))\;\middle|\; E\right]\\
    &=\prod_{j=1}^B P\left[ \lnot(X_{l_j}<M-c_{t}(l_j))|X_{l_j} \leq M\right] \label{line:independence}\\
    &=\prod_{j=1}^B \left(1-P\left[p(l_j)>C_{t-1}(l_j)+c_{t}(l_j)-M|p(l_j) \geq C_{t-1}(l_j)-M \right]\right)\\
    &\leq \prod_{j=1}^B \left(1-P\left[p(l_j)>c_{t}(l_j)\right]\right) \label{line:memorylessness}\\
    &= \prod_{j=1}^B \left(1-\int_{c_{t}(l_j)}^{\infty}\epsilon e^{-\epsilon x}dx\right)=\prod_{j=1}^B (1-e^{-\epsilon c_{t}(l_j)})\\
    &\leq \prod_{j=1}^B (\epsilon c_{t}(l_j)) \leq \epsilon^{B} c_{t}(\tilde{S}_{t}^{*}) \label{line:expinequality}
\end{align}
(\ref{line:independence}) follows by independence, (\ref{line:memorylessness}) is due to the memorylessness property of the exponential distribution (with equality unless $C_{t-1}(l_j)-M<0$), and (\ref{line:expinequality}) follows because $1-e^{-x} \leq x$ for $x\geq 0$ and $c_{t}(S_{t}^{*}) \geq \prod_{a \in S_{t}^{*}} c_{t}(a)=\prod_{j =1}^{B} c_{t}(l_j)$. The final claim follows by taking the expectation over the conditioned event $E$.
\end{proof}

\paragraph{Proof of Theorem \ref{thm:FPML-upper-bound}}

\begin{proof}
Consider a modified version of FPML where $p_{t}(a)=p_{1}(a)=p(a)$ for all $t>1$ (i.e. we keep the random perturbation fixed across rounds). Then this version of FPML picks the set $\tilde{S}_{t}^{*}$ in round $t$, and the regret can be bounded as

\begin{align*}
    \sum_{t=1}^{T} c_{t}(\tilde{S}_{t}^{*})- \min_{a^{*} \in \mathcal{A}} \sum_{t=1}^{T} c_{t}(a^{*})&= \left(\sum_{t=1}^{T} c_{t}(\tilde{S}_{t}^{*})-\min_{\tilde{a}^{*} \in \mathcal{A}}\sum_{t=1}^{T} c_{t}(\tilde{a}^{*})-p(\tilde{a}^{*})\right)\\ &+\left( \min_{\tilde{a}^{*} \in \mathcal{A}}\sum_{t=1}^{T} c_{t}(\tilde{a}^{*})-p(\tilde{a}^{*})-\min_{a^{*} \in \mathcal{A}} \sum_{t=1}^{T} c_{t}(a^{*})\right)
\end{align*}

The second term is $\leq 0$. The first term can be interpreted as the regret of a modified version of MAB with a $0$\textsuperscript{th} round with cost function $-p$, where we are only allowed to pull arms from round $t=1$. The regret increase incurred in the $0$\textsuperscript{th} round is at most $\max_{a \in \mathcal{A}} p(a)$. For the remaining rounds, we use Lemma \ref{lem:FPML-overtaking-bound} followed by Lemma \ref{lem:FPML-perturb} to get

\begin{align*}
    \mathop{\mathbb{E}}\left[\sum_{t=1}^{T} c_{t}(S_{t}^{*})- \min_{a^{*} \in \mathcal{A}} \sum_{t=1}^{T} c_{t}(a^{*})\right]&\leq \sum_{t=1}^{T}\mathop{\mathbb{E}}\left[\mathds{1}[\tilde{a}^{*,1}_{t} \not \in \tilde{S}_{t}^{*}]\right]+\mathop{\mathbb{E}}\left[\max_{a \in \mathcal{A}} p(\tilde{a})\right]\\
    & \leq \epsilon^T \mathop{\mathbb{E}}\left[\sum_{t=1}^{T} c_{t}(S_{t}^{*})\right]+\frac{(1+\ln(N))}{\epsilon}
\end{align*}

Where the inequality on $\mathop{\mathbb{E}}\left[\max_{a \in \mathcal{A}} p(\tilde{a})\right]$ comes from \citet{DBLP:journals/jcss/KalaiV05}. The final step is to argue that the unmodified version of FPML which chooses independent noise $p_{t}(a)$ in each round also achieves this bound. This is immediate because both versions of the algorithm incur the same expected cost in each round, and $\mathop{\mathbb{E}}\left[\sum_{t=1}^T c_t(\text{\textbf{FPML}})\right]=\sum_{t=1}^T\mathop{\mathbb{E}}\left[c_t(\text{\textbf{FPML}})\right]$. Having new random perturbations in each round is not necessary against oblivious adversaries, but is necessary to achieve the regret bound against adaptive adversaries. 
\end{proof}

\paragraph{Lower bound:}

\begin{proposition}
(Lower bounds) In the full feedback setting, any randomized algorithm has $R^{*}_{T} \geq \Omega\left(\left(\frac{1}{4}\right)^{B}(\log_{2}(N)-\log_{2}(B))\right)$ for $T \geq \Omega(\log_{2}(N)-\log_{2}(B))$. 
\end{proposition}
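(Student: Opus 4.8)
The plan is to construct a hard instance of cost functions for which any randomized algorithm with budget $B$ must incur regret $\Omega\!\left(\left(\frac14\right)^{B}(\log_2 N - \log_2 B)\right)$ against the single best arm in hindsight, over a horizon of $T = \Theta(\log_2 N - \log_2 B)$ rounds. The natural construction is a ``binary search'' style adversary: pick a random target arm $a^\star$ uniformly among the $N$ arms and, in each round, reveal a cost function that halves the set of arms still consistent with being $a^\star$. Concretely, over the first round I would split $\mathcal{A}$ into two halves, assign cost $0$ to the half containing $a^\star$ and cost $1$ to the other; in the second round split the surviving half again, and so on. After $\log_2(N/B)$ such rounds the consistent set has size $B$, and every one of these rounds the best arm in hindsight ($a^\star$) has cost $0$, so $\min_{a^\star}\sum_t c_t(a^\star) = 0$ and regret equals the algorithm's cumulative cost $\sum_t \mathbb{E}[c_t(S_t)]$.

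The key quantitative step is to lower bound $\mathbb{E}[c_t(S_t)] = \Pr[S_t \cap (\text{consistent set at round }t) = \emptyset]$ for a typical round $t$. Since the adversary's choices up to round $t$ are a deterministic function of $a^\star$, and the algorithm's choice $S_t$ is (randomized but) a function only of the revealed costs $c_1,\dots,c_{t-1}$ — equivalently, of which ``branch'' of the binary tree we are on — one can argue as follows: condition on the transcript up to round $t$; the consistent set $U_t$ has size $N/2^{t-1}$ (for $t \le \log_2(N/B)$), and $a^\star$ is still uniform over $U_t$, but the information the algorithm has does \emph{not} pin down which subset of $U_t$ will survive round $t$ — that subset is one of two halves of $U_t$, determined by $a^\star$, and from the algorithm's view each half is equally likely. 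The algorithm commits $S_t$ (of size $B$) before seeing $c_t$; the probability that $S_t$ misses the surviving half of $U_t$ is at least the probability that all of $S_t \cap U_t$ lands in the ``wrong'' half, which, since the split is a uniformly random balanced bipartition of $U_t$ independent of $S_t$, is at least $\left(\frac14\right)^{|S_t\cap U_t|} \ge \left(\frac14\right)^{B}$ (using that a random half contains a fixed element with probability $\approx \frac12$, so a fixed set of $\le B$ elements with probability $\ge \big(\tfrac12\big)^B \cdot \tfrac12 \ge (\tfrac14)^B$, and the events across the two directions combine). Summing over the $\Theta(\log_2 N - \log_2 B)$ rounds gives the claimed bound; one should also handle the tail rounds after $U_t$ shrinks to size $B$ (where the adversary can, say, keep the cost of $a^\star$ at $0$ and force a $(1/2)^B$-type cost, or simply stop, which is why the horizon requirement is $T = \Omega(\log_2 N - \log_2 B)$ rather than larger).

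The main obstacle I expect is making the conditioning argument fully rigorous: one must be careful that the algorithm's randomness is independent of the adversary's secret $a^\star$, and that conditioning on the observed cost transcript really does leave the next balanced split of $U_t$ uniform and independent of $S_t$. The cleanest way to do this is to fix the randomness of the algorithm first (so $S_t$ is a deterministic function of the transcript), then take expectation over the uniformly random $a^\star$, which induces a uniformly random root-to-leaf path in a balanced binary tree of depth $\log_2(N/B)$; at each internal node the left/right child is equally likely given the path so far, and $|S_t \cap U_t| \le B$ forces the missing-probability bound. A secondary technical point is the exact combinatorics of ``random balanced bipartition'' versus ``each element independently to a side'' — using an independent-coin-flip model for the split (and then noting the best arm in hindsight still has cost $0$ in expectation-dominating fashion, or restricting to the conditional event that $a^\star$'s side is nonempty) avoids fiddly hypergeometric estimates and yields the clean $(1/4)^B$ constant. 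I would also double-check the edge case $B$ close to $N$, where $\log_2(N/B)$ is small and the bound degrades gracefully to something trivial, consistent with the $(1-B/N)^B T$ observation already noted in the paper.
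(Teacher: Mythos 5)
Your proposal is essentially the paper's own proof: the paper likewise builds a random nested halving $\mathcal{A}_0 \supset \mathcal{A}_1 \supset \cdots$ (each $\mathcal{A}_t$ a uniformly random half of $\mathcal{A}_{t-1}$, with cost $0$ on $\mathcal{A}_t$ and $1$ elsewhere), bounds $\P[S_t \cap \mathcal{A}_t = \emptyset] \geq (1/4)^B$ via the same hypergeometric product while the surviving set still has size $\geq \tfrac{3}{2}B$, runs for $T = \Theta(\log_2 N - \log_2 B)$ rounds so the best arm in hindsight has cost $0$, and derandomizes by averaging; your ``half containing $a^\star$'' is distributionally identical to the paper's uniformly random half. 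The only caveat is your parenthetical claim that a fixed set of $\leq B$ elements lies in a random half with probability $\geq (1/2)^{B}\cdot\tfrac12$, which is not literally true (the hypergeometric factors are each slightly below $\tfrac12$); the correct fix, which you essentially anticipate, is the paper's condition $|\mathcal{A}_t| \geq \tfrac32 B$ making each factor at least $\tfrac14$.
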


\begin{proof}
    First suppose $N=2^{k}$ for some $k \in \mathbb{N}$. Let $\mathcal{A}_{0}=\mathcal{A}$. In round $t=1,\dots,T$, we let $\mathcal{A}_{t}$ be a uniformally randomly chosen subset of $\mathcal{A}_{t-1}$ of size $\max(2^{k-t},1)$, and we let $c_{t}(a)=0$ if $a \in \mathcal{A}_{t}$, $1$ otherwise. Suppose an algorithm chooses arms $S_{t} \subset \mathcal{A}$ in round $t$. Then 
    
    \begin{align*}
        P[S_t \cap A_{t} = \emptyset] &\geq \prod_{i=0}^{B-1} \left(1-\frac{|\mathcal{A}_{t}|}{|\mathcal{A}_{t-1}|-i}\right)\\
        &\geq  \left(1-\frac{|\mathcal{A}_{t}|}{|\mathcal{A}_{t-1}|-B}\right)^{B}\\
        &= \left(1-\frac{1}{2-B/|\mathcal{A}_{t}}\right)^{B}\\
        &\geq \left(1-\frac{3}{4}\right)^{B}\\
        &=\left(\frac{1}{4}\right)^{B}\\
    \end{align*}
    
    provided that $|\mathcal{A}_{t}| \geq \frac{3}{2}B$ and $t \leq k$, which holds when $t \leq k-\log_{2}(B)-\log_{2}\left(\frac{3}{2}\right)$. If we set $T=\lfloor k-\log_{2}(B)-\log_{2}\left(\frac{3}{2}\right)\rfloor$, then the expected cost of any fixed algorithm ALG is $\geq \left(\frac{1}{4}\right)^{B}T=\Omega\left(\left(\frac{1}{4}\right)^{B}(\log_{2}(N)-\log_{2}(B))\right)$. By construction, the cost of the best expert in hindsight is $0$. Since the expected regret is $0$, there exists fixed cost functions $c_{1},\dots,c_{T}$ such that the expected regret of ALG on this on this sequence is $\geq \Omega\left(\left(\frac{1}{4}\right)^{B}(\log_{2}(N)-\log_{2}(B))\right)$. If $N$ is not a power of 2, we can just let $\mathcal{A}_0 \subset \mathcal{A}$ be any subset of size $2^{\lfloor \log_{2}(N) \rfloor}$ and the asymptotic bounds remain the same. 
\end{proof}

\paragraph{Proof of Proposition \ref{prop:full-to-partial}}

\begin{proof}
    Fix deterministic cost functions $c_{1},\dots,c_{T}$. We first consider the simpler case where the unbiased cost estimators $(\hat{c}_1,\dots,\hat{c}_{T})$ are jointly independent of any random perturbations used by the algorithm, and the algorithm re-uses random perturbations between rounds, i.e. $p_{t}(a)=p(a)$ for all $a \in \mathcal{A}, t \in [T]$. Afterwards we will show how to reduce the general problem to this special case.
    
    Writing \(\mathcal{F}_t\) for the \(\sigma\)-algebra generated by all actions and observations (as well as any other randomness) up to and including round \(t\), for each \(t \in [T]\) let \(\hat{c}_t\) be a \(\mathcal{F}_t\)-measurable random function \(\mathcal{A} \to [0,K]\) such that $\mathop{\mathbb{E}}[\hat{c}_{t}(a) \mid \mathcal{F}_{t-1}]=c_{t}(a)$ for each \(a\). Assume an oblivious adversary and that w.l.o.g. instead of perturbations there is a `round zero' with costs \((-p(a))_{a \in \mathcal{A}}\) where \(p(a) \sim \mathrm{Exp}(\epsilon)\) independently for each \(a\); define \(\mathcal{F}_0 \coloneqq \sigma((p(a))_{a \in \mathcal{A}}\) to be the \(\sigma\)-algebra generated by these and include it in each \((\mathcal{F}_{t})_{t \ge 1}\).
    
    Writing \(\hat{C}_i(\cdot) \coloneqq \sum_{t=1}^i \hat{c}_t(\cdot)\) for cumulative estimated reward and \(\hat{C}_i^\ast(\cdot) \coloneqq \sum_{t=0}^i \hat{c}_t(\cdot) = \hat{C}_i(\cdot) - p(\cdot)\) for the same but including the `round zero' random initializations,
    define
    \[R'_i \coloneqq \sum_{t=1}^i c_t(S_t) - \min_{a \in \mathcal{A}} \hat{C}_i^\ast(a)\]
    for each \(i \in [T]\).
    Let \(S_i\) be the set of arms chosen by the algorithm at round \(i\) and \(a_i^\ast\) be the best of these by perturbed estimated cost. We follow the argument from \cref{lem:FPML-overtaking-bound}:
    \[
        R'_i - R'_{i-1} = c_i(S_i) - \hat{c}_i(a_i^\ast) + \left(\sum_{t=1}^{i-1} \hat{c}_t(a_{i-1}^\ast) - \sum_{t=1}^{i-1} \hat{c}_t(a_i^\ast)\right) \le c_i(S_i) - \hat{c}_i(a_i^\ast)
    \]
    and so
    \[
        \E[R'_i - R'_{i-1} \mid \mathcal{F}_{i-1}] \le c_i(S_i) - \E[\hat{c}_i(a_i^\ast) \mid \mathcal{F}_{i-1}] = c_i(S_i) - c_i(a_i^\ast) \le \mathds{1}[a_i^\ast \not\in S_i].
    \]
    Hence (using the tower law)
    \begin{align*}
    \E[R'_T \mid \mathcal{F}_0] = \E\left[\sum_{t=1}^T \E[R'_t - R'_{t-1} \mid \mathcal{F}_{t-1}] + R'_0 \right] &\le \E\left[\sum_{t=1}^T \mathds{1}[a_t^\ast \not\in S_t] + R_0 \mid \mathcal{F}_0 \right] \\
    &= \E[|\mathcal{I}| \mid \mathcal{F}_0]+ \max_{a \in \mathcal{A}} p(a),
    \end{align*}
    where \(\mathcal{I} \coloneqq \{t \in [T] : a_t^\ast \not\in S_t\}\). Noting that by Jensen's inequality
    \begin{align*}
        \E[R'_T \mid \mathcal{F}_0] \ge \sum_{t=1}^T c_t(S_t) - \min_{a \in \mathcal{A}}\E[\hat{C}_T(a) \mid \mathcal{F}_0] &= \sum_{t=1}^T c_t(S_t) - \min_{a \in \mathcal{A}} (C_T(a) - p(a)) \\
        &\ge \sum_{t=1}^T c_t(S_t) - C_T(a^\ast) + p(a^\ast)
    \end{align*}
    (where \(a^\ast\) is the best-in-hindsight arm) hence gives that the algorithm regret satisfies
    \[\E[R_T^\ast \mid \mathcal{F}_0] \le \E[|\mathcal{I}| \mid \mathcal{F}_0] + \max_{a \in \mathcal{A}} p(a) - p(a^\ast).\]

    Since \(\E[p(a^\ast)] = 1/\epsilon\) as for any fixed action, and \(\max_{a \in \mathcal{A}} p(a)\) is the maximum of \(|\mathcal{A}|\) i.i.d. \(\mathrm{Exp}(\epsilon)\) random variables, so has expectation at most \((1+\ln|\mathcal{A}|)/\epsilon\) as argued in \citet{DBLP:journals/jcss/KalaiV05}, taking expectations gives
    \[\E[R_T^\ast] \le \frac{\ln|\mathcal{A}|}{\epsilon} + \E[|\mathcal{I}|].\]

    It remains to upper-bound \(\E[|\mathcal{I}|]\). 

    Fix \(t \in [T]\) and let \(V \coloneqq \min_{a \in \mathcal{A} - S_t} \hat{C}_{t-1}^\ast(a)\). So for any \(a\), \(a \in S_t\) iff \(\hat{C}_{t-1}^\ast(a) < V\). Define \(E_a \coloneqq \{\hat{C}_{t-1}^\ast(a) < V - K\}\); if this holds then \(a\) must have been ahead of every action \(a' \not\in S_t\) by at least \(K\) and therefore \textit{cannot} be overtaken by any such action, since the estimated costs are all upper-bounded by \(K\). So
    \[\{a \text{ overtaken by some }a' \not\in S_t\} \subset E_a^c.\]
    Note that
    \begin{align*}
        \{a_t^\ast \not\in S_t\} &= \{\exists a' \in \mathcal{A} - S_t : \forall a \in S_t, \ a' \text{ overtakes } a \text{ at round } t\} \\
            &= \bigcup_{a' \in \mathcal{A} - S_t} \bigcap_{a \in S_t} \{\text{\(a'\) overtakes \(a\) at round \(t\)}\} \\
            &\subset \bigcap_{a \in S_t} \bigcup_{a' \in \mathcal{A} - S_t} \{\text{\(a'\) overtakes \(a\) at round \(t\)}\} = \bigcap_{a \in S_t} \{a \text{ overtaken by some } a' \not\in S_t\}.
    \end{align*}
    
    Let \(\mathcal{G}_t \coloneqq \sigma(S_t,(\hat{C}_{t-1}^\ast(a))_{a \notin S_t})\) be the \(\sigma\)-algebra generated by the random set \(S_t\) and the current perturbed estimated cumulative costs of the actions not in it. So we have
    \begin{align*}
        \P\left(a_t^\ast \not\in S_t \mid \mathcal{G}_t\right) &\le \P\left( \bigcap_{a \in S_t} \{a \text{ overtaken by some } a' \not\in S_t\} \mid \mathcal{G}_t \right) \\
            &\le \P\left(\bigcap_{a \in C} E_a^c \mid \mathcal{G}_t\right) \\
            &= \P\left(\bigcap_{a \in S_t} \{\hat{C}_{t-1}^\ast(a) < V-K\} \mid \mathcal{G}_t\right).
    \end{align*}
    But, since \(V = \min_{a \in \mathcal{A} - S_t} \hat{C}_{t-1}^\ast(a)\), applying \cref{lem:indep} gives us that
    \[\P\left(\bigcap_{a \in S_t} \{\hat{C}_{t-1}^\ast(a) < V-K\} \mid \mathcal{G}_t\right) = \prod_{a \in S_t} \P\left(\hat{C}_{t-1}^\ast(a) < V - K \mid \hat{C}_{t-1}^\ast(a) \le V \right).\]
    By the memoryless property of the exponential distribution, each term here just becomes
    \[
    1 - \P\left(p(a) \ge \hat{C}_{t-1}(a)-V+K \mid p(a) \ge \hat{C}_{t-1}(a) - V\right) \le 1 - \P(p(a) \ge K) = 1 - \e^{-K \epsilon}.
    \]
    Where we have used the assumption that the perturbation $p(a)$ is independent of $\hat{C}_{t-1}$. Thus \(\P(a_t^\ast \not\in S_t \mid \mathcal{G}_t) \le (1 - \e^{-K \epsilon})^{B}\). Since this expression is deterministic and so trivially independent from the \(\sigma\)-algebra \(\mathcal{G}_t\), this immediately implies that \(\P(a_t^\ast \not\in S_t) \le (1 - \e^{-K \epsilon})^{B}\).

    The result then follows, since \(\E[|\mathcal{I}|] = \sum_{t=1}^T \P(a_t^\ast \not\in S_t) \le T(1 - \e^{-K\epsilon})^{B}\).
    
    We now show how to reduce the general problem to a simpler case where the unbiased cost estimates $(\hat{c}_1,\dots,\hat{c}_{T})$ are jointly independent of the perturbations used by the algorithm, and the algorithm re-uses random perturbations between rounds, i.e. $p_{t}(a)=p(a)$ for all $a \in \mathcal{A}, t \in [T]$. Consider the general problem. Let $p_{t}$ be the noise perturbations of the algorithm in round $t$, so $p_{t}(a) \sim \frac{1}{\epsilon}$Exp. Let $\hat{c}_{:t}=(\hat{c}_1,\dots,\hat{c}_{t-1})$ and $S(\hat{c}_{:t},p_{t})\subset \mathcal{A}$ be the $B$ lowest cost-perturbed arms given $\hat{c}_{:t}$, $p_{t}$ (i.e. the arms chosen by the algorithm in round $t$ if cost vectors $\hat{c}_{:t}$ are observed and noise perturbation $p_t$ is chosen). We are guaranteed that $\E[\hat{c}_{t}|p_1,\hat{c}_1,p_2,\hat{c}_2,\dots,p_{t}]=c_t$. The expected regret is
    
    \begin{align*}
    \E_{p_1,\hat{c}_1,p_2,\hat{c}_2,\dots,p_T} \left[\sum_{t=1}^T c_t\left(S(\hat{c}_{:t},p_{t})\right)\right]-\min_{a \in \mathcal{A}} \sum_{t=1}^T c_{t}(a)
    \end{align*}
    
    Focusing on just the first term, and letting $\{p'_t\}_{t=0}^T$ be independent random noise perturbations where $p'_t(a) \sim \frac{1}{\epsilon}$Exp, we have
    
    \begin{align*}
        \E_{p_1,\hat{c}_1,p_2,\hat{c}_2,\dots,p_T} \left[\sum_{t=1}^T c_t\left(S(\hat{c}_{:t},p_{t})\right)\right]\\ 
         & = \sum_{t=1}^T\E_{p_1,\hat{c}_1,p_2,\hat{c}_2,\dots,p_T} \left[ c_t\left(S(\hat{c}_{:t},p_{t})\right)\right]\\
        & = \sum_{t=1}^T\E_{p_1,\hat{c}_1,p_2,\hat{c}_2,\dots,p_t} \left[ c_t\left(S(\hat{c}_{:t},p_{t})\right)\right]\\
        & = \sum_{t=1}^T\E_{p_1,\hat{c}_1,p_2,\hat{c}_2,\dots,p_t} \E_{p'_t} \left[ c_t\left(S(\hat{c}_{:t},p'_{t})\right)\right]\\
        & = \sum_{t=1}^T\E_{p'_t} \left[\E_{p_1,\hat{c}_1,p_2,\hat{c}_2,\dots,p_t}  c_t\left(S(\hat{c}_{:t},p'_{t})\right)\right]\\
        & = \sum_{t=1}^T\E_{p'_t} \left[\E_{p_1,\hat{c}_1,p_2,\hat{c}_2,\dots,p_T}  c_t\left(S(\hat{c}_{:t},p'_{t})\right)\right]\\
        & = \sum_{t=1}^T\E_{p'_0} \left[\E_{p_1,\hat{c}_1,p_2,\hat{c}_2,\dots,p_T}  c_t\left(S(\hat{c}_{:t},p'_{0})\right)\right]\\
        & = \E_{p'_0} \left[\E_{p_1,\hat{c}_1,p_2,\hat{c}_2,\dots,p_T}  \sum_{t=1}^T c_t\left(S(\hat{c}_{:t},p'_{0})\right)\right]
    \end{align*}
    
    Therefore the final expected regret is equal to
    
    \begin{align}
    \E_{p'_0} \left[\E_{p_1,\hat{c}_1,p_2,\hat{c}_2,\dots,p_T}  \sum_{t=1}^T c_t\left(S(\hat{c}_{:t},p'_{0})\right)\right]-\min_{a \in \mathcal{A}} \sum_{t=1}^T c_{t}(a)
    \end{align}
    
    Note the expression $\sum_{t=1}^T c_t\left(S(\hat{c}_{:t},p'_{0})\right)$ is precisely the cost incurred by the algorithm when observing cost estimates $\hat{c}_{:T}$ and using random perturbations $p_0$ in each round, where $\E[\hat{c}_t|p_0,p_1,\hat{c}_1,\dots,p_{t-1}]=c_t$. We therefore conclude that the expected regret is equal to the expected regret of the algorithm in the special case where (a) the algorithm fixes an initial perturbation $p'_{0}$ and uses this randomness for all subsequent rounds and (b) where $p'_{0}$ is jointly independent of $\hat{c}_{:T}$.  
\end{proof}

\section{Section 3: Generalized regret bounds for Online Submodular Function Maximization}

\begin{algorithm}
\caption{OG\textsubscript{hybrid}(\(B\),\(\widetilde{B}\))}\label{alg:og_hybrid}
\begin{algorithmic}

\Require \(B \ge \widetilde{B} \ge 1\). Assume for simplicity that \(\widetilde{B} \mid B\); define \(L \coloneqq B/\widetilde{B}\).
\State Let \(\mathcal{B}_1,\ldots,\mathcal{B}_{L}\) be instances of FPML, each with budget \(\widetilde{B}\).

\For{rounds $t=1,\dots,T$}
    \State Let \(S_{t,0} = \langle \rangle\) be the empty schedule.
    \For{$i=1,\dots,L$}
        \State Use \(\mathcal{B}_{i}\) to choose \(\widetilde{B}\) actions \(a_{(i-1)\widetilde{B}+1}^t,\ldots,a_{i\widetilde{B}}^t\).
        \State Set \(S_{t,i} \coloneqq S_{t,i-1} \oplus \langle a_{(i-1)\widetilde{B}+1}^t, \ldots a_{i\widetilde{B}}^t \rangle\).
    \EndFor
    \State Set \(S_t \coloneqq S_{t,L}\); receive the job \(f_t\).
    \For{$i=1,\dots,L$}
        \State For each action \(a \in \mathcal{A}\) feed back the cost \(c_{t}^{(i)}(a) \coloneqq 1-(f_t(\langle a_{1,t}^\ast, \ldots, a_{i-1,t}^\ast, a \rangle) - f_t(\langle a_{1,t}^\ast,\ldots,a_{i-1,t}^\ast\rangle))\) to FPML instance \(\mathcal{B}_{i}\).
        \State Define \(a_{i,t}^\ast \coloneqq \argmin_{j \in [\widetilde{B}]} c_{t}^{(i)}(a_{(i-1)\widetilde{B}+j})\).
    \EndFor
\EndFor

\end{algorithmic}
\end{algorithm}

\textbf{Proof of \cref{thm:streeter_generalized}} \\

Before proving the theorem, we give a modification to the original result from \cite{DBLP:conf/nips/StreeterG08}. The problem setting they considered was slightly more general:

\begin{dfn}
    Let an action now be an activity-duration pair \(a = (\nu,\tau) \in \mathcal{V} \times (0,\infty) = \mathcal{A}\) for some fixed finited set of \emph{activities} \(\mathcal{V}\).\footnote{We will enforce integer durations so that there are only finitely many possible actions to choose from given a duration constraint.} The \emph{length} \(\ell(S)\) of a schedule \(S \in \mathcal{S}\) is now the sum of the durations of all the actions in \(S\). Write \(S_{\langle i \rangle}\) for the prefix of length \(i\) of schedule \(S\).
\end{dfn}

The algorithm OG they introduced, which takes a budget \(B\) and experts algorithm \(\mathcal{E}\), is given in \cref{alg:og} using our notation for ease of reference.

\begin{algorithm}
\caption{OG(\(B,\mathcal{E}\))}\label{alg:og}
\begin{algorithmic}

\Require \(B \ge 1\).
\State Let \(\mathcal{E}_1,\ldots,\mathcal{E}_{B}\) be instances of experts algorithm \(\mathcal{E}\) (e.g. Hedge).

\For{rounds $t=1,\dots,T$}
    \State Let \(S_{t,0} = \langle \rangle\) be the empty schedule.
    \For{$i=1,\dots,B$}
        \State Use \(\mathcal{E}_{i}\) to choose an action \(a_{i}^t = (\nu,\tau) \in \mathcal{A}\).
        \State With probability \(1/\tau\) set \(S_{t,i} \coloneqq S_{t,i-1} \oplus \langle a_{i}^t \rangle\), otherwise set \(S_{t,i} \coloneqq S_{t,i-1}\).
    \EndFor
    \State Set \(S_t \coloneqq S_{t,B}\); receive the job \(f_t\).
    \State For each \(i \in [B]\) and each action \(a = (\nu,\tau) \in \mathcal{A}\) feed back the cost \(c_{t}^{(i)}(a) \coloneqq (f_t(S_{t,i} \oplus \langle a \rangle) - f_t(S_{t,i}))/\tau\) to experts algorithm \(\mathcal{E}_{i}\).
\EndFor

\end{algorithmic}
\end{algorithm}

We first prove a lemma generalizing Theorem 6 in \citet{DBLP:conf/nips/StreeterG08}:

\begin{lemma} \label{lem:thm6}
    Let \(f\) be any job and let \(\bar{G} = \langle \bar{g}_1,\bar{g}_2,\ldots\rangle\) be an infinite `greedy' schedule satisfying
    \[\frac{f(\bar{G}_j \oplus \bar{g}_j) - f(\bar{G}_j)}{\bar{\tau}_j} \ge \max_{(\nu,\tau) \in \mathcal{V} \times (0,\infty)} \left(\frac{f(\bar{G}_j \oplus \langle (\nu,\tau)\rangle) - f(\bar{G}_j)}{\tau}\right) - \epsilon_j, \qquad j \ge 1\]
    for additive errors \(\epsilon_1,\epsilon_2,\ldots \ge 0\), where \(\bar{g}_j = (\bar{v}_j, \bar{\tau}_j)\) and \(\bar{G}_j = \langle \bar{g}_1,\ldots,\bar{g}_{j-1}\rangle\) for each \(j \ge 1\).
    
    Then for any \(L, B_0 \in \mathbb{N}\) and for \(B_1 \coloneqq \sum_{j=1}^L \bar{\tau}_j\),
    \[f(\bar{G}_{\langle B_1\rangle}) > \left(1-\e^{-B_1/B_0}\right)f(S_{B_0}^\ast) - \sum_{j=1}^L \epsilon_j \bar{\tau}_j\]
    where \(S_{B_0} ^\ast \coloneqq \argmax_{S \in \mathcal{S} : \ell(S) = B_0}f(S)\) is the best schedule of length \(B_0\) for \(f\).
\end{lemma}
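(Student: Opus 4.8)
The plan is to carry out the standard approximate‑greedy submodular analysis, phrased via the residual gap to the best length-$B_0$ schedule and then unrolled into an exponential bound; this mirrors the proof of Theorem~6 in \citet{DBLP:conf/nips/StreeterG08}, with the per-step errors $\epsilon_j$ and the free parameters $L,B_0,B_1$ carried through. Throughout I would write $\Delta_j \coloneqq f(\bar{G}_j \oplus \langle\bar{g}_j\rangle) - f(\bar{G}_j) = f(\bar{G}_{j+1}) - f(\bar{G}_j)$ for the gain of the $j$-th greedy action, $\gamma_j \coloneqq \max_{(\nu,\tau)\in\mathcal{V}\times(0,\infty)}\bigl(f(\bar{G}_j \oplus \langle(\nu,\tau)\rangle) - f(\bar{G}_j)\bigr)/\tau$ for the best available marginal rate at step $j$ (finite since $f$ is $[0,1]$-valued and durations are bounded below), and $R_j \coloneqq f(S_{B_0}^\ast) - f(\bar{G}_j)$ for the residual gap, so that the greedy hypothesis reads $\Delta_j/\bar{\tau}_j \ge \gamma_j - \epsilon_j$.

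First I would prove the one-step domination inequality: for every $j$,
\[
R_j \;=\; f(S_{B_0}^\ast) - f(\bar{G}_j) \;\le\; f(\bar{G}_j \oplus S_{B_0}^\ast) - f(\bar{G}_j) \;\le\; B_0\,\gamma_j.
\]
The first inequality is monotonicity. For the second, write $S_{B_0}^\ast = \langle a_1,\dots,a_k\rangle$ with $a_i=(\nu_i,\tau_i)$ and $\sum_{i=1}^k\tau_i = B_0$, telescope $f(\bar{G}_j \oplus S_{B_0}^\ast) - f(\bar{G}_j)$ over the prefixes of $S_{B_0}^\ast$, bound each increment by $f(\bar{G}_j \oplus \langle a_1,\dots,a_i\rangle) - f(\bar{G}_j \oplus \langle a_1,\dots,a_{i-1}\rangle) \le f(\bar{G}_j \oplus \langle a_i\rangle) - f(\bar{G}_j) \le \tau_i\gamma_j$ using submodularity (with $S_1 = \bar{G}_j$, $S_2 = \langle a_1,\dots,a_{i-1}\rangle$, $a = a_i$) and the definition of $\gamma_j$, and sum. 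Combining with the greedy hypothesis gives $R_j \le B_0(\Delta_j/\bar{\tau}_j + \epsilon_j)$, i.e. $\Delta_j \ge \bar{\tau}_j(R_j/B_0 - \epsilon_j)$.

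Next, since $R_{j+1} = R_j - \Delta_j$, this yields the recursion $R_{j+1} \le R_j\bigl(1-\bar{\tau}_j/B_0\bigr) + \bar{\tau}_j\epsilon_j$. Unrolling it from $j=1$ (where $R_1 = f(S_{B_0}^\ast) - f(\langle\rangle) \le f(S_{B_0}^\ast)$, as $f\ge 0$) up to $j=L$ gives
\[
R_{L+1} \;\le\; f(S_{B_0}^\ast)\prod_{j=1}^L\Bigl(1-\tfrac{\bar{\tau}_j}{B_0}\Bigr) \;+\; \sum_{j=1}^L\bar{\tau}_j\epsilon_j\prod_{k=j+1}^L\Bigl(1-\tfrac{\bar{\tau}_k}{B_0}\Bigr).
\]
Each trailing product is at most $1$, so the error term is at most $\sum_{j=1}^L\bar{\tau}_j\epsilon_j$, while $\prod_{j=1}^L(1-\bar{\tau}_j/B_0) \le \prod_{j=1}^L \e^{-\bar{\tau}_j/B_0} = \e^{-B_1/B_0}$ by $1-x\le \e^{-x}$, with strict inequality since the $\bar{\tau}_j$ are positive. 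Because $\bar{G}_{L+1}$ has length exactly $B_1 = \sum_{j=1}^L\bar{\tau}_j$ we have $\bar{G}_{\langle B_1\rangle} = \bar{G}_{L+1}$, so $R_{L+1} = f(S_{B_0}^\ast) - f(\bar{G}_{\langle B_1\rangle})$, and rearranging $R_{L+1} < f(S_{B_0}^\ast)\e^{-B_1/B_0} + \sum_{j=1}^L\bar{\tau}_j\epsilon_j$ gives the claimed bound.

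The part that needs genuine care is the non-unit-duration bookkeeping: the telescoping in the domination step must run over the actual (possibly long) actions of $S_{B_0}^\ast$, and one must be careful with $\prod_j(1-\bar{\tau}_j/B_0)$ when a greedy action has duration exceeding $B_0$ (making a factor negative), which requires a small separate argument --- one natural route is to first dispose of the case $R_j < 0$ for some $j\le L+1$, where monotonicity already gives $f(\bar{G}_{\langle B_1\rangle}) \ge f(\bar{G}_j) \ge f(S_{B_0}^\ast)$. In the application of interest (\cref{thm:streeter_generalized}) all durations equal $1$, so $\bar{\tau}_j = 1 \le B_0$, every factor lies in $[0,1)$, and this difficulty disappears; the submodular telescoping and the $1-x\le \e^{-x}$ estimate are otherwise routine.
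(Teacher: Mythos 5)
Your proof is correct and follows essentially the same route as the paper's: the same residual-gap recursion $R_{j+1} \le R_j(1-\bar{\tau}_j/B_0) + \bar{\tau}_j\epsilon_j$, unrolled into the exponential bound. The only differences are cosmetic — you prove the one-step domination inequality (the paper's cited ``Fact 2'') from scratch via telescoping and submodularity, and you bound the product termwise with $1-x \le \e^{-x}$ rather than the paper's equalize-the-$\bar{\tau}_j$ argument, while being somewhat more explicit than the paper about the sign of the factors $1-\bar{\tau}_j/B_0$.
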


\begin{proof}
    For each \(j \in \mathbb{N}\) write \(\Delta_j \coloneqq f(S_{B_0}^\ast) - f(\bar{G}_j)\). By Fact 2 from \citet{DBLP:conf/nips/StreeterG08}, for any \(j \in \mathbb{N}\), \(b > 0\) and \(S \in \mathcal{S}\) with \(\ell(S) \le b\),
    \[f(S) \le f(\bar{G}_j) + b \cdot (s_j + \epsilon_j),\]
    where
    \[s_j \coloneqq \max_{(v, \tau) \in \mathcal{V} \times (0,\infty)} \frac{f(\bar{G}_j \oplus \langle(n,\tau)\rangle) - f(\bar{G}_j)}{\tau} = \frac{f(\bar{G}_j \oplus \bar{g}_j) - f(\bar{G}_j)}{\bar{\tau}_j} = \frac{f(\bar{G}_{j+1})-f(\bar{G}_j)}{\bar{\tau}_j},\]
    so in particular for any \(j\)
    \begin{align}
        f(S_{B_0}^\ast) = \max_{S \in \mathcal{S} : \ell(S) = B_0} f(S) & \le f(\hat{G}_j) + B_0 \cdot (s_j + \epsilon_j) \\
            &= f(\hat{G}_j) + B_0\left(\frac{f(\bar{G}_{j+1})-f(\bar{G}_j)}{\bar{\tau}_j} + \epsilon_j\right) \\
            &= f(\hat{G}_j) + B_0\left(\frac{\Delta_j-\Delta_{j+1}}{\bar{\tau}_j} + \epsilon_j\right),
    \end{align}
    giving \(\Delta_j \le B_0\left(\frac{\Delta_j-\Delta_{j+1}}{\bar{\tau}_j} + \epsilon_j\right)\).

    Rearranging gives \( \Delta_{j+1} \le \Delta_j\left(1 - \frac{\bar{\tau}_j}{B_0}\right) + \bar{\tau}_j \epsilon_j\) for each \(j\), and unrolling this inequality and using that \(1 - \frac{\bar{\tau}_j}{B_0} < 1 \ \forall j\) as in \citet{DBLP:conf/nips/StreeterG08} gives us
    \[\Delta_{L+1} \le \Delta_1\left(\prod_{j=1}^L 1 - \frac{\bar{\tau}_j}{B_0}\right) + \sum_{j=1}^L \bar{\tau}_j\epsilon_j.\]
    By definition \(B_1 = \sum_{j=1}^L \bar{\tau}_j\epsilon_j\), and maximizing the product above subject to this constraint results in \(\bar{\tau}_j = \frac{B_1}{L}\) for all \(j\). Thus
    \[\prod_{j=1}^L 1 - \frac{\bar{\tau}_j}{B_0} \le \prod_{j=1}^L 1 - \frac{B_1/L}{B_0} = \left(1 + \frac{(-B_1/B_0)}{L}\right)^L < \e^{-B_1/B_0}\]
    and so
    \[f(S_{B_0}^\ast) - f(\bar{G}_{L+1}) = \Delta_{L+1} < \Delta_1 \e^{-T_1/T_0} + \sum_{j=1}^L \bar{\tau}_j\epsilon_j \le f(S_{B_0}^\ast) \e^{-B_1/B_0} + \sum_{j=1}^L \bar{\tau}_j\epsilon_j,\]
    giving \(f(\bar{G}_{\langle T_1\rangle}) = f(\bar{G}_{L+1}) > (1-\e^{-B_1/B_0})f(S_{B_0}^\ast) - \sum_{j=1}^L \bar{\tau}_j\epsilon_j\) as required.
\end{proof}

Next we prove a generalized regret bound for the original OG algorithm:

\begin{lemma} \label{thm:streeter}
    For \(B \ge B' \log T\) the algorithm OG, run using Hedge as the subroutine experts algorithm, produces a sequence of schedules \(S_1,\ldots,S_B\) with regret
    \[\E\left[\sum_{t=1}^T f_t(S_{B'}^\ast) - \sum_{t=1}^T f_t(S_t)\right] = \mathcal{O}\left(\E\left[\sum_{j=1}^B R_{T,1}(\mathcal{E}_j)\right]\right)\] 
    relative to \(S_{B'}^\ast \coloneqq \argmax_{S \in \mathcal{S} : \ell(S) = B'} \sum_{t=1}^T f_t(S)\), the best-in-hindsight fixed schedule of length \(B'\), where \(R_{T,1}(\mathcal{E}_j)\) is the 1-regret incurred by the \(j\)\textsuperscript{th} experts algorithm.
    
    In particular, when run with Hedge as the subroutine experts algorithm, this is \(\mathcal{O}\left(\sqrt{BT\log N}\right)\).
\end{lemma}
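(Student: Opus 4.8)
The plan is to mirror the proof of \cref{lem:thm6}, replacing the idealized greedy marginal gains by the (expected) gains the $B$ Hedge instances actually collect and charging the shortfall to their $1$-regrets. Fix the job sequence $f_1,\dots,f_T$ and set $F \coloneqq \sum_{t=1}^T f_t$, which is again a job, with range $[0,T]$. Let $S^{*} \coloneqq S_{B'}^{*}$. For $i = 0,\dots,B$ let $\Phi_i \coloneqq \E\!\left[\sum_{t=1}^T f_t(S_{t,i})\right]$ be the expected cumulative reward of the partial schedules maintained in each round; then $\Phi_0 \ge 0$ (since $f_t(\langle\rangle) \ge 0$) and $\Phi_B = \E\!\left[\sum_t f_t(S_t)\right]$ is exactly the algorithm's expected reward, so the quantity to bound is $F(S^{*}) - \Phi_B$. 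Write $\widehat g^{(i)}_t(a)$ for the per-round marginal-gain feedback handed to the $i$-th instance $\mathcal E_i$, so $\tau_a\,\widehat g^{(i)}_t(a) = f_t(S_{t,i-1} \oplus \langle a\rangle) - f_t(S_{t,i-1})$; averaging over the $1/\tau$ inclusion coins gives $\Phi_i - \Phi_{i-1} = \E\!\left[\sum_t \widehat g^{(i)}_t(a_i^t)\right]$, where $a_i^t$ is the action $\mathcal E_i$ plays at round $t$.

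The argument then proceeds greedy-round by greedy-round. First, $\mathcal E_i$ observes the adapted reward sequence $\widehat g^{(i)}_1,\dots,\widehat g^{(i)}_T$ and has $1$-regret $R_{T,1}(\mathcal E_i)$, so $\sum_t \widehat g^{(i)}_t(a_i^t) \ge G^{*}_i - R_{T,1}(\mathcal E_i)$ where $G^{*}_i \coloneqq \max_a \sum_t \widehat g^{(i)}_t(a)$. Second --- and this is the heart of the matter --- monotonicity and submodularity let us lower-bound $G^{*}_i$ by the residual reward of the optimal schedule: monotonicity gives $f_t(S^{*}) \le f_t(S_{t,i-1} \oplus S^{*})$, and telescoping over the actions of $S^{*}$ and applying submodularity term-by-term against the fixed prefix $S_{t,i-1}$ yields $f_t(S^{*}) - f_t(S_{t,i-1}) \le \sum_{a\in S^{*}}\bigl(f_t(S_{t,i-1}\oplus\langle a\rangle) - f_t(S_{t,i-1})\bigr)$; summing over $t$ and bounding the resulting average over $S^{*}$'s actions (weights $\tau_a$ summing to $B'$) by its maximum gives $F(S^{*}) - \sum_t f_t(S_{t,i-1}) \le B'\,G^{*}_i$. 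The point to be careful about is that $S_{t,i-1}$ is random and varies with $t$, yet the bound must end up phrased in terms of a \emph{single best fixed} action $a$, since that is the only benchmark a Hedge instance controls --- summing over $t$ before taking the max over $a$ is exactly what makes this go through. Combining the two bounds and taking expectations,
\[
  \Phi_i - \Phi_{i-1} \;\ge\; \tfrac{1}{B'}\bigl(F(S^{*}) - \Phi_{i-1}\bigr) - \E[R_{T,1}(\mathcal E_i)],
\]
i.e.\ $F(S^{*}) - \Phi_i \le (1 - 1/B')\bigl(F(S^{*}) - \Phi_{i-1}\bigr) + \E[R_{T,1}(\mathcal E_i)]$. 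Unrolling over $i = 1,\dots,B$, using $\Phi_0 \ge 0$, $F(S^{*}) \le T$, and $(1-1/B')^B \le \e^{-B/B'} \le 1/T$ (this is where $B \ge B'\log T$ enters), gives $F(S^{*}) - \Phi_B \le 1 + \sum_{j=1}^B \E[R_{T,1}(\mathcal E_j)]$, which is the claimed $\mathcal O(\E[\sum_j R_{T,1}(\mathcal E_j)])$ regret. Conceptually, the extra $\log T$ greedy rounds turn the usual $(1-\e^{-1})$ multiplicative loss into an additive constant.

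For the ``in particular'' statement it remains to bound $\sum_{j=1}^B \E[R_{T,1}(\mathcal E_j)]$ for Hedge. The crude per-instance bound $R_{T,1}(\mathcal E_j) = \mathcal O(\sqrt{T\log N})$ already yields $\mathcal O(B\sqrt{T\log N})$; to sharpen this to $\mathcal O(\sqrt{BT\log N})$ one invokes a first-order (``small-loss'') regret bound for Hedge together with the observation that the \emph{realized} per-greedy-round gains telescope, $\sum_j \sum_t \widehat g^{(j)}_t(a_j^t) \le \sum_t f_t(S_t) \le T$, so that the quantities controlling the $B$ refined regret terms sum to $\mathcal O(T)$; Cauchy--Schwarz plus a one-line self-bounding step then gives $\sum_j R_{T,1}(\mathcal E_j) = \mathcal O(\sqrt{BT\log N} + B\log N) = \mathcal O(\sqrt{BT\log N})$ for $T = \Omega(B\log N)$.

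I expect the main obstacle to be exactly the tension highlighted above: the submodularity inequalities are inherently statements about deterministic schedules, but the prefixes $S_{t,i-1}$ against which each $\mathcal E_i$ implicitly competes are random and $t$-dependent, so one must fix the filtration, apply submodularity pathwise, take the $\max$ over actions only after summing over rounds, and only then pass to expectations --- and separately verify that the feedback $\widehat g^{(i)}_t$ is measurable with respect to what $\mathcal E_i$ knows when it acts. The $\sqrt{B}$-rather-than-$B$ dependence in the final bound is a secondary wrinkle that genuinely needs the first-order refinement of Hedge's regret rather than the textbook worst-case bound.
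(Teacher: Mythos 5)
Your proof is correct and follows essentially the same route as the paper: the same greedy recursion $F(S^{*})-\Phi_i \le (1-1/B')(F(S^{*})-\Phi_{i-1}) + \E[R_{T,1}(\mathcal{E}_i)]$, unrolled with $(1-1/B')^B \le \e^{-B/B'} \le 1/T$ to convert the $(1-\e^{-1})$ multiplicative loss into an additive $+1$; the paper merely packages the recursion through \cref{lem:thm6} via the meta-schedule device of \citet{DBLP:conf/nips/StreeterG08} rather than deriving it inline on the potentials $\Phi_i$. Your treatment of the $\mathcal{O}(\sqrt{BT\log N})$ step (first-order Hedge bound, telescoping of realized gains to $\le T$, Cauchy--Schwarz and self-bounding) is the standard argument that the paper simply cites from \citet{DBLP:conf/nips/StreeterG08}.
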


\begin{proof}
    Consider the quantity \(\rho_{B,B'} \coloneqq \left(1 - \e^{-B/B'}\right) \sum_{t=1}^T f_t(S_{B'}^\ast) - \sum_{t=1}^T f_t(S_t)\). As argued in \citet{DBLP:conf/nips/StreeterG08}, we may view the sequence of actions \(a_i^1,\ldots,a_i^T\) selected by each experts algorithm \(\mathcal{E}_i\) as a single `meta-action' \(\tilde{a}_i \in \mathcal{A}^T\); so the schedules \(S_1,\ldots,S_T\) output by \textbf{OG} can be viewed as a single `meta-schedule' \(\tilde{S} = \langle \tilde{a}_1,\ldots,\tilde{a}_B\rangle\) over \(\mathcal{A}^T\) which is a version of the greedy schedule \(\bar{G}_{B+1}\) for the job \(f = \frac{1}{T}\sum_{t=1}^T f_t\), and it may be assumed that each meta-action \(\tilde{a}_t\) takes unit time per job. Thus we may write
    \[\rho_{B,B'} = T\left[\left(1 - \e^{-B/B'}\right)f(S_{B'}^\ast) - f(\tilde{S})\right]\]
    (after extending the domain of \(f\) appropriately). Applying \cref{lem:thm6} with \(L=B\), \(B_0 = B'\), \(B_1 = \sum_{j=1}^{B} \bar{\tau}_j = B\) (by the unit-time assumption) then immediately gives
    \[\rho_{B,B'} < T\sum_{j=1}^{B} \bar{\tau_j}\epsilon_j = T\sum_{j=1}^{B} \epsilon_j.\]
    Taking expectations,
    \[\E[\rho_{B,B'}] \le T\sum_{j=1}^{B} \E[\epsilon_j] = T\sum_{j=1}^{B} \E\left[\frac{R_{T,1}(\mathcal{E}_j)}{T}\right]\]
    where \(R_{T,1}(\mathcal{E}_j)\) is the 1-regret incurred by the \(j\)\textsuperscript{th} experts algorithm; here we used that \(\E[\epsilon_j] = \E[R_{T,1}(\mathcal{E}_j)/n]\) as argued in \citet{DBLP:conf/nips/StreeterG08}. So \(\E[\rho_{B,B'}] \le \E\left[\sum_{j=1}^{B} R_{T,1}(\mathcal{E}_j)\right]\).
    
    The result then follows quickly: since \(B \ge B' \log T\), so \(\e^{-B/B'} \le \e^{-\ln T} = T^{-1}\). Thus
    \[\rho_{B,B'} \ge \left(1 - T^{-1}\right) \sum_{t=1}^T f_t(S_{B'}^\ast) - \sum_{t=1}^T f_t(S_t) = \mathcal{R}_{B'} - T^{-1} \sum_{t=1}^T f_T(S_{B'}^\ast)\]
    where \(\mathcal{R}_{B'} \coloneqq \sum_{t=1}^T f_t(S_{B'}^\ast) - \sum_{t=1}^T f_t(S_t)\) is the regret of interest. Consequently,
    \[\mathcal{R}_{B'} \le \rho_{B,B'} + T^{-1} \sum_{t=1}^T f_t(S_{B'}^\ast) \le \rho_{B,B'} + T^{-1} \cdot T = \rho_{B,B'} + 1.\]
    and the result follows.
    
    The bound \(\E\left[\sum_{j=1}^{B} R_{T,1}(\mathcal{E}_j)\right] = \mathcal{O}(\sqrt{BT\log N})\) when using Hedge was shown in \citet{DBLP:conf/nips/StreeterG08}.
\end{proof}

Finally we prove the theorem on OG\textsubscript{hybrid}:

\begin{proof}[Proof of \cref{thm:streeter_generalized}]
    Note first that under \cref{additional_assumption}, any job \(f\), any schedule \(S \in \mathcal{S}\) and any sub-schedule \(S'\) of \(S\) (i.e. the actions of \(S'\) appear in order in \(S\)) satisfy
    \begin{equation*}
    f(S) \ge f(S'); \label{eq:monotonicity_general}
    \end{equation*}
    this is immediate using monotonicity and induction.

    Suppose for each \(i\in[L]\) there is a fictional experts algorithm (classical full feedback multi-armed bandit algorithm) \(\mathcal{E}_{i}\) which picks \(a^\ast_{i,t}\) at each round \(t\), and consider a hypothetical instance of the standard algorithm \textbf{OG} run with time allowance \(L\) and these fictional experts algorithms \(\mathcal{E}_1,\ldots,\mathcal{E}_{L}\) as subroutines.

    Since \(L \ge B'\log T\) (by our assumption that \(B \ge B' \widetilde{B} \log T\)), by \cref{thm:streeter} the \(B'\)-regret of our \(\mathbf{OG}\) instance is upper-bounded in expectation by \(\sum_{i=1}^{L}\mathcal{R}_1(\mathcal{E}_{i})\), where \(\mathcal{R}_1(\mathcal{E}_i)\) is the total 1-regret experienced by \(\mathcal{E}_i\).
    
    But the payoff received by this \textbf{OG} instance at each round \(t\) is \(f(\langle a_{1,t}^\ast, \ldots, a_{L,t}^\ast\rangle)\), which by \cref{eq:monotonicity_general} is upper-bounded by \(f(S_t)\), the payoff of \textbf{OG\textsubscript{hybrid}}, since the actions \(a_{1,t}^\ast,\ldots,a_{L,t}^\ast\) appear in order in \(S_t\). So the \(B'\)-regret \(\mathcal{R}_{B'}\) of \textbf{OG\textsubscript{hybrid}} must be at most that of our fictional \textbf{OG} instance, giving the upper bound
    \[\E[\mathcal{R}_{B'}] \le \sum_{i=1}^{L}\E[\mathcal{R}_1(\mathcal{E}_{i})]. \label{eq:streeter_general_sumbound}\]
    
    It remains to argue how large each of the regret of each of these `fictional' experts algorithms \(\mathcal{E}_{i}\) is. Writing \(a^{\ast\ast}_{i} = \argmin_{a \in \mathcal{A}} \sum_{t=1}^T c_{t}^{(i)}(a)\) for the best-in-hindsight fixed action under the costs passed to these subroutines, the regret incurred by \(\mathcal{E}_{i}\) is therefore
    \begin{align}
        \mathcal{R}_1(\mathcal{E}_{i}) &= \sum_{t=1}^T c_{t}^{(i)}(a^\ast_{i,t}) - \sum_{t=1}^T c_{t}^{(i)}(a^{\ast\ast}_i) \\
            &= \sum_{t=1}^T \max_{j\in[\widetilde{B}]} c_{t}^{(i)}(a_{(i-1)\widetilde{B}+j}^t) - \sum_{t=1}^T c_{t}^{(i)}(a^{\ast\ast}_i) = \mathcal{R}_1(\mathcal{B}_i).
    \end{align}
    where \(\mathcal{R}_1(\mathcal{B}_i)\) is the 1-regret incurred by multitasking bandit algorithm \(\mathcal{B}_i\). So by \cref{eq:streeter_general_sumbound}
    \[\E[\mathcal{R}_{B'}] \le \sum_{i=1}^{L} \E[\mathcal{R}_1(\mathcal{B}_i)] = L \E[\mathcal{R}_1(\mathcal{B})] = \frac{B \E[\mathcal{R}_1(\mathcal{B})]}{L},\]
    where \(\E[\mathcal{R}_1(\mathcal{B})]\) is the expected 1-regret of any of the instances \(\mathcal{B}_1,\ldots,\mathcal{B}_{L}\) of \(\mathcal{B}\).
\end{proof}

\paragraph{Proof of \cref{prop:top_B_regret}}

\begin{proof}[Sketch proof]
    This is a special case of the more general result that the expected regret relative to the best-in-hindsight fixed set of size \(B'\) is at most
    \[\frac{1-B'^{-1} + \ln\left(N/B'\right)}{\epsilon} + T\sum_{j=0}^{B'-1} \binom{B}{j} \e^{-j\epsilon}(1-\e^{-\epsilon})^{B-j} + \mathtt{err}_{B'}\]
    where \(\mathtt{err}_{B'}\) is the difference in cumulative cost between the best-in-hindsight set of \(B'\) actions and the set of the top \(B'\) actions in hindsight on the given problem instance.
    
    The proof of this is a simple adaptation of the 1-regret argument, using ``an action not in the top \(B\) enters the best \(B'\)-set" as the event of interest; use the harmonic series form of the expectation of the max of exponential random variables to get a lower bound, and use a binomial counting argument to bound the probability of the event.
\end{proof}

\section{Experiments}

\subsection{Full comparison of \texorpdfstring{OG\textsubscript{hybrid}}{OG\_hybrid}}

In \cref{tab:streeter_bbo} we give a more detailed comparison of FPML and OG with various instantiations of OG\textsubscript{hybrid} on the hyperparameter-selection task from \cref{sec:experiments}. Specifically, we include for each \(B\) and each possible pair \((B_1,B_2)\) s.t. \(B_1 B_2 = B\) a version of OG\textsubscript{hybrid} with \(B_2\) internal boxes and arm budget \(B_1\) per box. As can be seen, in all cases decreasing the greediness and adding more arms per box is beneficial in this application.

\renewcommand{\c}[1]{#1}
\renewcommand{\d}[1]{#1}
\begin{table}
    \centering
    \scriptsize
    \caption{Sample means and standard deviations of normalized validation scores of FPML, OG\textsubscript{hybrid} and OG over black-box optimizers.}
    \label{tab:streeter_bbo}
    \begin{subtable}{0.49\textwidth}
    \centering
    \caption{\(B=1\)}
    \begin{tabularx}{\textwidth}{r *2{|Y}}
        \multicolumn{1}{c}{} & \multicolumn{1}{c}{\textbf{Mean}} & \multicolumn{1}{c}{\textbf{StD}} \\
        \toprule
        \textbf{Best in hindsight} & \c{0.574} & \d{0} \\
        \textbf{FPML} & \c{0.426} & \d{0.0202} \\
        \textbf{Exp3} & \c{0.351} & \d{0.0194} \\
        \bottomrule
    \end{tabularx}
    \end{subtable}
    \hfill
    \begin{subtable}{0.49\textwidth}
    \centering
    \caption{\(B=2\)}
    \begin{tabularx}{\textwidth}{r *2{|Y}}
        \multicolumn{1}{c}{} & \multicolumn{1}{c}{\textbf{Mean}} & \multicolumn{1}{c}{\textbf{StD}} \\
        \toprule
        \textbf{Best in hindsight} & \c{0.710} & \d{0} \\
        \textbf{FPML} & \c{0.652} & \d{0.0194} \\
        \textbf{OG\textsubscript{hybrid}} (\((B_1,B_2)=(1,2)\)) & \c{0.577} & \d{0.0187} \\
        \textbf{OG} & \c{0.519} & \d{0.0179} \\
        \bottomrule
    \end{tabularx}
    \end{subtable}
    \\
    \vspace{2em}
    \begin{subtable}{0.49\textwidth}
    \centering
    \caption{\(B=3\)}
    \begin{tabularx}{\textwidth}{r *2{|Y}}
        \multicolumn{1}{c}{} & \multicolumn{1}{c}{\textbf{Mean}} & \multicolumn{1}{c}{\textbf{StD}} \\
        \toprule
        \textbf{Best in hindsight} & \c{0.779} & \d{0} \\
        \textbf{FPML} & \c{0.751} & \d{0.0151} \\
        \textbf{OG\textsubscript{hybrid} (\((B_1,B_2)=(1,3)\))} & \c{0.657} & \d{0.0191} \\
        \textbf{OG} & \c{0.617} & \d{0.0166} \\
        \bottomrule
    \end{tabularx}
    \end{subtable}
    \hfill
    \begin{subtable}{0.49\textwidth}
    \centering
    \caption{\(B=4\)}
    \begin{tabularx}{\textwidth}{r *2{|Y}}
        \multicolumn{1}{c}{} & \multicolumn{1}{c}{\textbf{Mean}} & \multicolumn{1}{c}{\textbf{StD}} \\
        \toprule
        \textbf{Best in hindsight} & \c{0.836} & \d{0} \\
        \textbf{FPML} & \c{0.813} & \d{0.0108} \\
        \textbf{OG\textsubscript{hybrid} (\((B_1,B_2)=(2,2)\))} & \c{0.756} & \d{0.0149} \\
        \textbf{OG\textsubscript{hybrid} (\((B_1,B_2)=(1,4)\))} & \c{0.716} & \d{0.0178} \\
        \textbf{OG} & \c{0.689} & \d{0.0151} \\
        \bottomrule
    \end{tabularx}
    \end{subtable}
    \\
    \vspace{2em}
    \begin{subtable}{0.49\textwidth}
    \centering
    \caption{\(B=5\)}
    \begin{tabularx}{\textwidth}{r *2{|Y}}
        \multicolumn{1}{c}{} & \multicolumn{1}{c}{\textbf{Mean}} & \multicolumn{1}{c}{\textbf{StD}} \\
        \toprule
        \textbf{Best in hindsight} & \c{0.874} & \d{0} \\
        \textbf{FPML} & \c{0.855} & \d{0.0094} \\
        \textbf{OG\textsubscript{hybrid} (\((B_1,B_2)=(1,5)\))} & \c{0.756} & \d{0.0150} \\
        \textbf{OG} & \c{0.734} & \d{0.0140} \\
        \bottomrule
    \end{tabularx}
    \end{subtable}
    \hfill
    \begin{subtable}{0.49\textwidth}
    \centering
    \caption{\(B=6\)}
    \begin{tabularx}{\textwidth}{r *2{|Y}}
        \multicolumn{1}{c}{} & \multicolumn{1}{c}{\textbf{Mean}} & \multicolumn{1}{c}{\textbf{StD}} \\
        \toprule
        \textbf{Best in hindsight} & \c{0.901} & \d{0} \\
        \textbf{FPML} & \c{0.888} & \d{0.0072} \\
        \textbf{OG\textsubscript{hybrid} (\((B_1,B_2)=(3,2)\))} & \c{0.836} & \d{0.0111} \\
        \textbf{OG\textsubscript{hybrid} (\((B_1,B_2)=(2,3)\))} & \c{0.814} & \d{0.0143} \\
        \textbf{OG\textsubscript{hybrid} (\((B_1,B_2)=(1,6)\))} & \c{0.785} & \d{0.0137} \\
        \textbf{OG} & \c{0.767} & \d{0.0157} \\
        \bottomrule
    \end{tabularx}
    \end{subtable}
\end{table}

\subsection{Synthetic tasks}

In this section we evaluate our algorithms on three synthetic tasks. In all cases,
\begin{itemize}
    \item let \(S^\ast\) be the best-in-hindsight set of \(B\) arms;
    \item let \(S_{\mathrm{greedy}}\) be the greedy choice of \(B\) arms in hindsight;
    \item let \(S_{\mathrm{top}}\) be the top \(B\) arms in hindsight.
\end{itemize}

\paragraph{Task 1:} The first environment is one where \(S^\ast = S_{\mathrm{greedy}}\) and this set does better than \(S_{\mathrm{top}}\); greediness is better than picking the top \(B\) arms. There are \(|\mathcal{A}|=15\) available arms and two types of round, \(A\) and \(B\), which occur with equal probability; costs are distributed within each round according to \cref{tab:syntha_costs}. So the best fixed arm set of any size up to 10 will be split evenly across arms \(\{1,2,3,4,5\}\) and arms \(\{11,12,13,14,15\}\)---and will be the greedy choice---but for \(B \le 5\) the top \(B\) arms will always be in \(\{1,2,3,4,5\}\). We see in \cref{fig:syntha} that FPML does not outperform the greedy algorithms on this task.

\paragraph{Task 2:} The second environment is one where (approximately) \(S^\ast = S_{\mathrm{greedy}} = S_{\mathrm{top}}\); greediness is good but no better than picking the top \(B\) arms. There are \(|\mathcal{A}|=10\) available arms and costs are distributed according to \cref{tab:synthb_costs}; because there are no groups of anticorrelated actions, the performance gap between the best set and the top \(B\) arms is trivially small. The results in \cref{fig:synthb} show that FPML outperforms the greedy algorithms on this task.

\paragraph{Task 3:} The third environment is one where \(S^\ast = S_{\mathrm{top}}\) and this set does better better than \(S_{\mathrm{greedy}}\); greediness is worse than just picking the top \(B\) arms. Suppose there are \(|\mathcal{A}|=4\) available arms and a budget of \(B=3\). Costs are deterministic and listed in \cref{tab:synthc_costs} for some parameter \(\delta\) which we set to 0.01. The top 3 arms are \(S_{\mathrm{top}} = \{1,3,4\}\) and this is also the best-in-hindsight set \(S^\ast\), incurring minimum cost 0 at each round. A quick calculation shows that the greedy choice \(S_{\mathrm{greedy}}\) is either \(\{1,2,3\}\) or \(\{1,2,4\}\), though, and either of these sets incur an average minimum cost of \(1/8-\delta/4\), substantially higher. Our empirical results in \cref{tab:synthc} show this gap in practice.

\begin{table}
    \begin{center}
    \caption{Cost distributions for round types \(A\) and \(B\) in the first synthetic environment; Beta distributions are parameterized by mean and variance, not shape.}
    \label{tab:syntha_costs}
    \begin{tabular}{r|c|c|c}
        \toprule
        \textbf{Arm} & \textbf{\(A\)-rounds} & \textbf{\(B\)-rounds} & \textbf{Resulting mean} \\
        \midrule
        \textbf{Actions 1 to 5} & \(\mathrm{Beta}(0.4,0.01)\) & Always 1 & \c{0.7} \\
        \textbf{Actions 6 to 10} & \(\mathrm{Beta}(0.6,0.01)\) & Always 1 & \c{0.8} \\
        \textbf{Actions 11 to 15} & Always 1 & \(\mathrm{Beta}(0.8,0.01)\) & \c{0.9} \\
        \bottomrule
    \end{tabular}
    \end{center}
\end{table}

\begin{figure}
    \centering
    \includegraphics[width=0.7\textwidth]{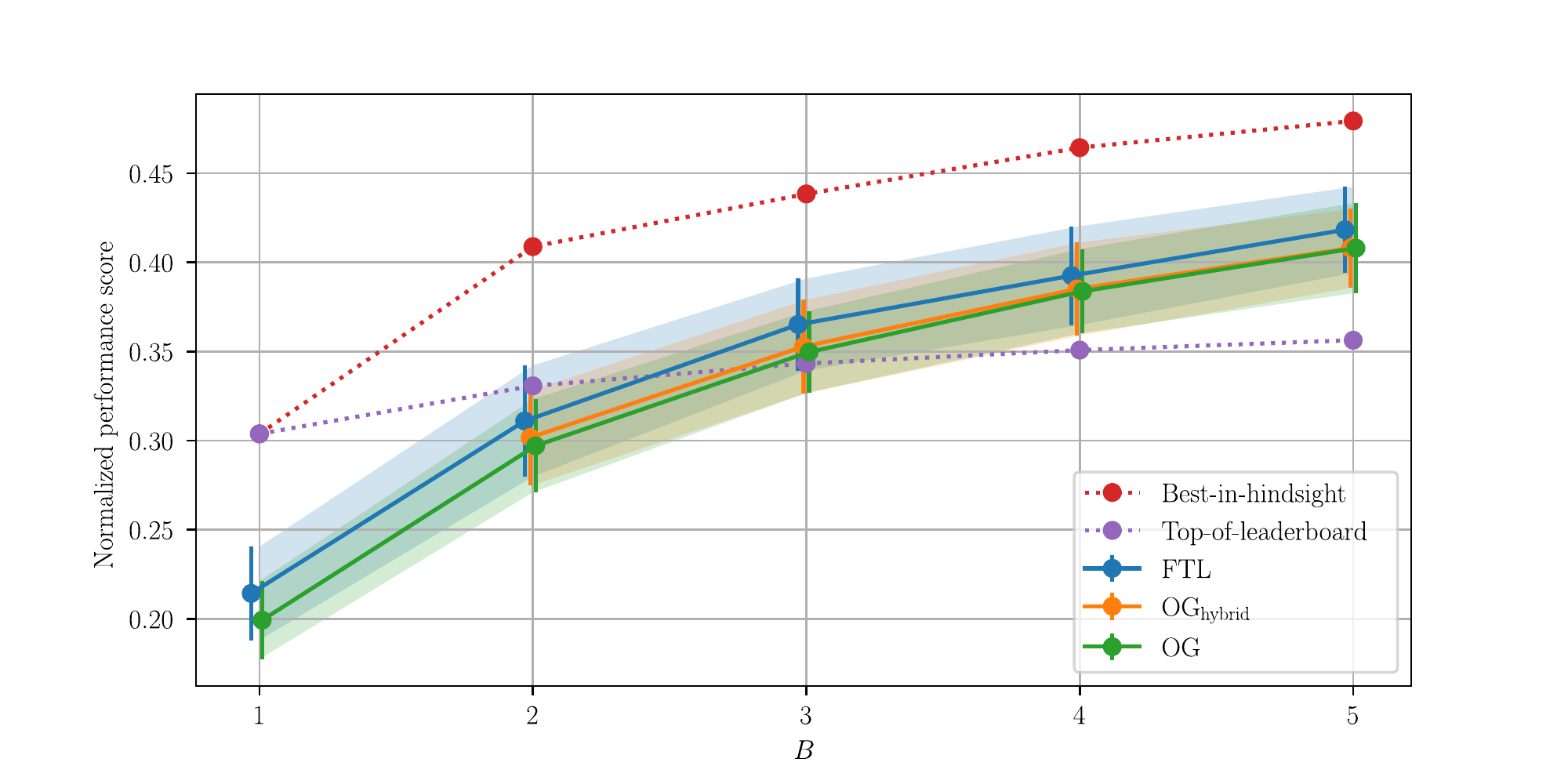}
    \caption{Performances (1-cost) on the first synthetic task. OG\textsubscript{hybrid} is run with FPML boxes each with arm budget 1.}
    \label{fig:syntha}
\end{figure}

\begin{table}
    \begin{center}
    \caption{Cost distributions in the second synthetic environment.}
    \label{tab:synthb_costs}
    \begin{tabular}{r|c}
        \toprule
        \textbf{Arm} & \textbf{Distribution} \\
        \midrule
        \textbf{1} & \(\mathrm{Beta}(0.4,0.01)\) \\
        \textbf{2} & \(\mathrm{Beta}(0.45,0.01)\) \\
        \textbf{3} & \(\mathrm{Beta}(0.5,0.01)\) \\
        \textbf{4} & \(\mathrm{Beta}(0.55,0.01)\) \\
        \textbf{5} & \(\mathrm{Beta}(0.6,0.01)\) \\
        \textbf{6} & \(\mathrm{Beta}(0.65,0.01)\) \\
        \textbf{7} & \(\mathrm{Beta}(0.7,0.01)\) \\
        \textbf{8} & \(\mathrm{Beta}(0.75,0.01)\) \\
        \textbf{9} & \(\mathrm{Beta}(0.8,0.01)\) \\
        \textbf{10} & \(\mathrm{Beta}(0.85,0.01)\) \\
        \bottomrule
    \end{tabular}
    \end{center}
\end{table}

\begin{figure}
    \centering
    \includegraphics[width=0.7\textwidth]{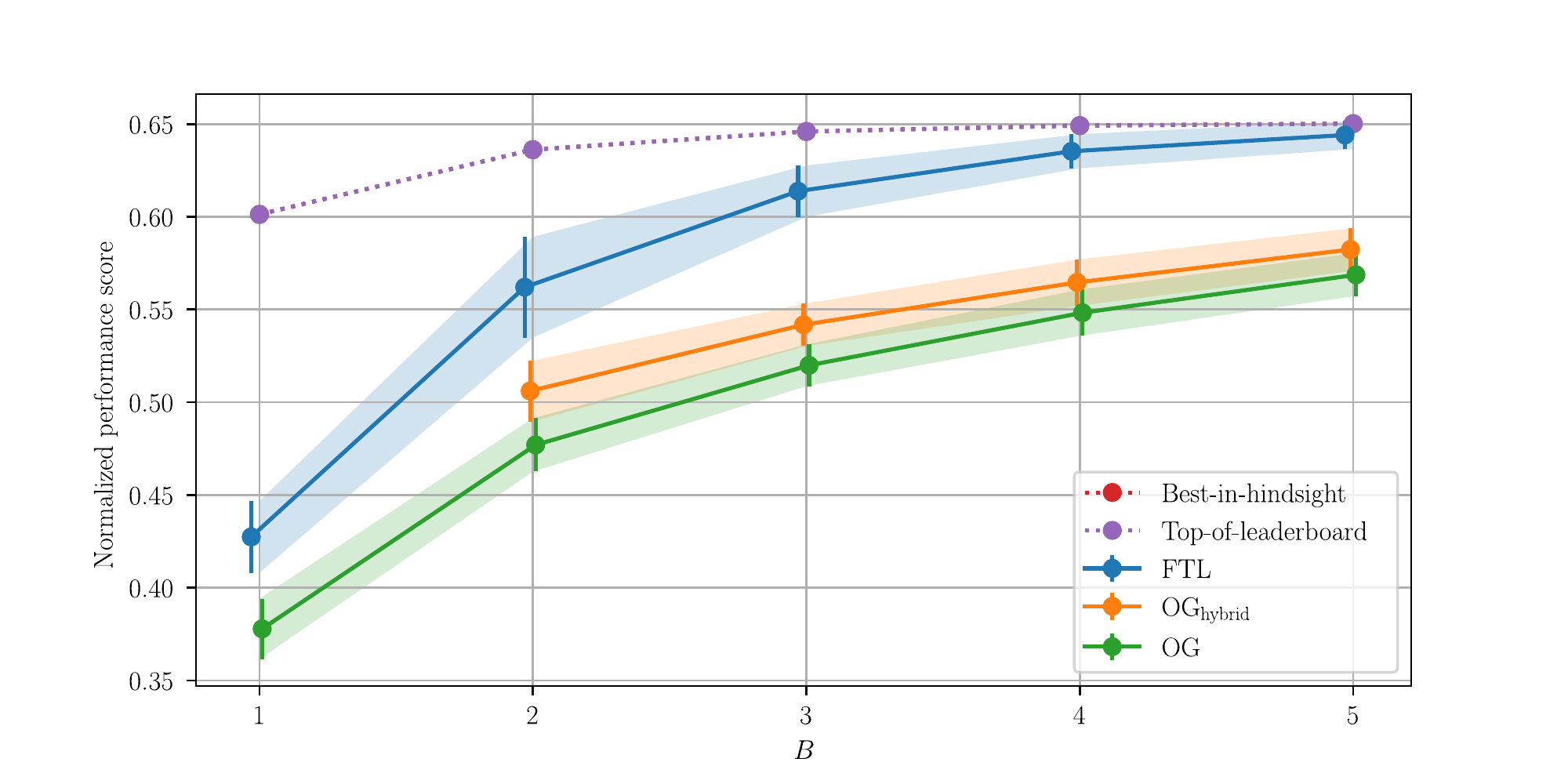}
    \caption{Performances (1-cost) on the second synthetic task. OG\textsubscript{hybrid} is run with FPML boxes each with arm budget 1.}
    \label{fig:synthb}
\end{figure}

\begin{table}
    \begin{center}
    \caption{Costs in the third synthetic environment, for some parameter \(\delta \in (0,1/2)\).}
    \label{tab:synthc_costs}
    \begin{tabularx}{\textwidth}{r *5{|Y}}
        \toprule
        \textbf{Arm} & \multicolumn{4}{c|}{\textbf{Reward at rounds \(i \equiv k \mod 4\) for...}} & \textbf{Average} \\
         & \(k = 1\) & \(k=2\) & \(k=3\) & \(k=4\) & \textbf{cost}\\
        \midrule
        \textbf{1} & \(1-\delta\) & \(1-\delta\) & 0 & 0 & \(1/2-\delta/2\) \\
        \textbf{2} & \(1/2-\delta\) & \(1/2-\delta\) & 1 & 1 & \(3/4-\delta/2\) \\
        \textbf{3} & 0 & 1 & 0 & 1 & \(1/2\) \\
        \textbf{4} & 1 & 0 & 1 & 0 & \(1/2\) \\
        \bottomrule
    \end{tabularx}
    \end{center}
\end{table}

\begin{table}
    \begin{center}
    \caption{Means and standard deviations over 50 trials of performances (1-cost) for various combinations of FPML and online greedy algorithms in the third synthetic environment, with \(\delta=0.01\).}
    \label{tab:synthc}
    \begin{tabular}{r|c|c}
        \toprule
        \textbf{Algorithm} & \textbf{Mean} & \textbf{StD} \\
        \midrule
        \textbf{Best-in-hindsight} & \c{1.000} & \d{0} \\
        \textbf{Top-of-leaderboard} & \c{1.000} & \d{0} \\
        \textbf{FPML} & \c{0.964} & \d{0.0145} \\
        \textbf{OG\textsubscript{hybrid}} (\((B_1,B_2) = 1,3)\)) & \c{0.823} & \d{0.0200} \\
        \textbf{OG} & \c{0.799} & \d{0.0202} \\
        \bottomrule
    \end{tabular}
    \end{center}
\end{table}

\subsection{Geometric resampling}

The \textit{geometric resampling} technique used in the second and third partial feedback versions of FPML in the experiments is adapted from \citet{neu2013efficient}. At each round cost estimates
\[\hat{c}_t(a) \coloneqq \begin{cases} \frac{c_t(a)}{ \hat{q}_{t,a}} & \text{if \(a\) was pulled}, \\ 0 & \text{otherwise} \end{cases} \]
are made, where \(\hat{q}_{t,a}\) is an estimate of the probability \(q_{t,a} \coloneqq \P(\text{arm \(a\) pulled at round \(t\)})\). These estimates are made by sampling \(\frac{1}{\hat{q}_{t,a}} \sim \mathrm{Geom}(q_{t,a})\), which is done by repeating the algorithm's execution at this round and counting how many trials are needed until \(a\) is pulled again. In practice, the number of repetitions must be capped and this introduces some bias to the estimates, but this is not problematic in practice. In fact, there is a bias variance trade-off, because $K=\max_{a \in \mathcal{A}} |\hat{c}_{t}(a)|$ is bounded by the number of samples we take. Therefore more samples lead to lower bias but higher variance. Using bounds similar to those of Proposition \ref{prop:full-to-partial} as a guide (the bounds of Proposition 5 were subsequently refined after the experiments were concluded), we picked the number of samples to be $\left(N\left(\frac{TN}{\ln(N)}\right)^{\tilde{B}}\right)^{1/(2\tilde{B}+1)}$, so  $K=\left(N\left(\frac{TN}{\ln(N)}\right)^{\tilde{B}}\right)^{1/(2\tilde{B}+1)}$  and $\epsilon =  \left(\frac{\ln(N)}{T}\left(\frac{\ln(N)}{TN}\right)^{\tilde{B}}\right)^{1/(2\tilde{B}+1)}$, where $\tilde{B}$ is the budget of each \textbf{FPML-partial} box.

These estimators make complete use of the information received at each round, unlike the simple one-arm uniform sampling, $B$ arms exploiting version of FPML with partial feedback mentioned in \cref{sec:FPML}. Moreover, the construction of cost estimates means no explicit exploration is necessary; an arm that hasn't been pulled for several rounds will be overtaken in estimated cumulative cost by ones that have, and so will eventually be pulled again, thus inducing a self-stabilizing property that would not occur if we used the same technique to estimate rewards \(r_t(a) \coloneqq 1 - c_t(a)\) instead.

\subsection{Methods}

\paragraph{Reward definitions:} For the black-box optimization experiments in \cref{sec:experiments}, the \textit{reward} ($1-$cost) for each black-box optimizer on each machine learning task (i.e. round) was defined as follows. This approach was inspired heavily by the Bayesmark package used in the 2020 NeurIPS BBO Challenge and which we based our implementation on \citep{bayesmark}.

Fix a round \(t\) and an optimizer \(a\). Let \(\mathtt{opt}_t\) be an estimate of the global minimum classification/regression loss achievable (at validation, not test) on the task corresponding to round \(t\). Define \(\overline{\mathtt{rand}}_t\) to be the mean performance of a random hyperparameter search on this task (i.e. the smallest loss achieved using any hyperparameter in the random search, averaged over trials).\footnote{In reality this is estimated using a more statistically efficient technique than actually performing the random search, as in the Bayesmark package.} Finally, let \(\overline{\mathtt{loss}}_t(a)\) be the actual averaged minimum loss of the optimizer \(a\) on this problem.

The reward is then defined as
\[r_i(a) \coloneqq \frac{\overline{\mathtt{loss}}_t(a) - \mathtt{opt}_t}{\overline{\mathtt{rand}}_t(a) - \mathtt{opt}_t}.\]
Conceptually, the reward is 0 when optimizer \(a\) performs as badly as a random search, and 1 when it performs as well as is possible on this task.

As per usual, the reward for a bandit algorithm selecting multiple optimizers at each round is then calculated as the maximum of the rewards of each optimizer (equivalent to the minimum of costs).

\paragraph{Bayesian optimizers used:} The nine black-box optimization algorithms we ran the experiments in \cref{sec:experiments} over were as follows:

\begin{enumerate}
    \item Hyperopt \citep{bergstra2015hyperopt}
    \item The \texttt{AUCBanditMetaTechniqueA} technique from OpenTuner \citep{ansel2014opentuner}
    \item The \texttt{PSO\_GA\_Bandit} technique from OpenTuner \citep{ansel2014opentuner}
    \item The \texttt{PSO\_GA\_DE} technique from OpenTuner \citep{ansel2014opentuner}
    \item PySOT \citep{eriksson2019pysot}
    \item Scikit-Optimize \citep{tim_head_2018_1207017} using base estimator \texttt{GBRT} and acquisition objective \texttt{gp\_hedge}
    \item Scikit-Optimize \citep{tim_head_2018_1207017} using base estimator \texttt{GP} and acquisition objective \texttt{gp\_hedge}
    \item Scikit-Optimize \citep{tim_head_2018_1207017} using base estimator \texttt{GP} and acquisition objective \texttt{LCB}
    \item Random search
\end{enumerate}

The default settings of each package were used.

\section{Section \ref{sec:lp}: Linear Programming}

\paragraph{Proof of Proposition \ref{prop:lp}}

(Note: this proof was given for $B=1$ in \cite{DBLP:journals/toc/AroraHK12} with slightly tighter bounds, and essentially remains unchanged for $B \geq 1$).

\begin{proof}
    We run the FPML oracle with budget $B$, $N=n$ arms, and $\epsilon=((\ln(N)+1)/T)^{1/(B+1)}$. In round $t \in [T]$ we do the following: Let $d_{t}$ be the joint distribution over $N$ arms returned the FPML oracle in this round. We pass $d_{t}$ to the $(\rho,B)$-bounded oracle, and receive either a vector $x_t \in P$ or that no $x_t$ exists which satisfies the oracle problem. Let us first suppose that we always receive an $x_t$ for each round. Then define the cost function $c_t(i):=A_i x_t - b_i \in [-\rho,\rho]$ and pass this to FPML. After $T$ rounds, and by scaling and translating the cost functions to lie in $[0,1]$, Theorem \ref{thm:FPML-upper-bound} implies that $\forall j \in [N]$ 
    
    \begin{align*}
        \frac{\sum_{t=1}^{T} \mathop{\mathbb{E}}_{(i_1,\dots,i_B) \sim d_t}\left[\min_{i \in \{i_1,\dots,i_B\}} A_{i}x-b_{i}\right]}{T} \leq \frac{4\rho T^{\frac{1}{B+1}}(1+\ln(N))^{\frac{B}{B+1}}}{T} + \frac{\sum_{t=1}^T A_j x_t -b_j}{T}
    \end{align*}
    
    By assumption of the $(\rho,B)$-bounded oracle, the left hand side is $\geq 0$. When $T \geq \left(\frac{1}{\epsilon}\right)^{\frac{B+1}{B}} (4\rho)^{\frac{B+1}{B}} (1+\ln(N))$, it follows that $x:=\frac{\sum_{t=1}^T x_t}{T}$ satisfies $\forall j \in [N], A_j x \geq b_j-\epsilon$. Since $P$ is convex, $x \in P$ and we are done. Now suppose that in some round $t$ we were told the oracle problem was not solvable. We claim that we can conclude that the problem is not feasible and we are done. This is because if $\exists x \in P$ s.t. $Ax \geq b$, then $\mathop{\mathbb{E}}_{(i_1,\dots,i_B) \sim d}\left[\min_{i \in \{i_1,\dots,i_B\}} A_{i}x-b_{i}\right] \geq \mathop{\mathbb{E}}_{(i_1,\dots,i_B) \sim d}\left[\min_{i \in \{i_1,\dots,i_B\}} 0\right]=0$ and so the oracle problem would be solvable. 
\end{proof}

\end{document}